\documentclass{article}
\usepackage{iclr2027_conference, times}
\iclrfinalcopy
\usepackage[utf8]{inputenc} % allow utf-8 input
\usepackage[T1]{fontenc}    % use 8-bit T1 fonts
\usepackage{hyperref}       % hyperlinks
\usepackage{url}            % simple URL typesetting
\usepackage{amsfonts}       % blackboard math symbols
\usepackage{nicefrac}       % compact symbols for 1/2, etc.
\usepackage{xcolor}         % colors

\usepackage{booktabs}

\usepackage{caption}
\usepackage{subcaption}
\usepackage{amsmath}
\usepackage{amssymb}
\usepackage{mathtools}
\usepackage{amsthm}
\usepackage{thm-restate}

\usepackage{graphicx}

\usepackage[capitalize,noabbrev]{cleveref}

\theoremstyle{plain}
\newtheorem{theorem}{Theorem}[section]
\newtheorem{corollary}[theorem]{Corollary}

\theoremstyle{definition}
\newtheorem{definition}[theorem]{Definition}
\newtheorem{assumption}[theorem]{Assumption}

\theoremstyle{remark}

\title{Beyond Gaussian Worlds: Latent Geometry Matters for JEPAs}
\author{
Léo Nicollier$^{1,2}$ \quad
Enric Meinhardt-Llopis$^{1}$ \quad
Marc Pic$^{2}$ \quad
Pablo Musé$^{1,3}$ \quad
Gabriele Facciolo$^{1,4}$ \\[1ex]
$^{1}$Université Paris-Saclay, CNRS, ENS Paris-Saclay, Centre Borelli, France\\
$^{2}$Advanced Track and Trace\\
$^{3}$IIE, Facultad de Ingeniería, Universidad de la República, Uruguay\\
$^{4}$Institut Universitaire de France\\
\texttt{leo.nicollier@gmail.com}
}
\date{}

\begin{document}
\maketitle 
\lhead{Preprint} %a enlever pour soumission

\begin{abstract}
Recent Joint-Embedding Predictive Architectures (JEPAs) prevent representation collapse by constraining learned representations to follow a prescribed target distribution, such as an isotropic Gaussian or the uniform distribution on a hypersphere. 
Klindt et al. (2026) showed that, under their Euclidean assumptions, matching a Gaussian target can recover Gaussian latent variables up to a linear transformation, and that the Gaussian is the unique distribution with this guarantee.
We extend their analysis to latent variables supported on embedded Riemannian manifolds and derive conditions on the latent geometry and positive-pair dynamics under which alignment and exact distribution matching guarantee linear recovery.
In particular, when the latent variables are uniformly distributed on a sphere and the representations are matched to the same spherical distribution, every optimal representation recovers the latent state up to an orthogonal transformation.
This shows that Gaussian uniqueness is not a universal property of distribution-matched JEPAs: non-Euclidean latent geometries can admit other linearly recoverable distributions.
We further derive an approximate-recovery bound that is strictly tighter for the spherical world than for the Gaussian world.
Experiments on Gaussian, spherical, and toroidal latent spaces show that geometrically compatible targets yield better linear recovery when optimization succeeds, whereas mismatched targets distort the latent structure.
This advantage persists in high-dimensional Clifford-torus worlds.
%\gf{Finally,   in a simulator-access setting, we recover an unobserved initial condition of a three-dimensional two-body system from its observed trajectory.}{}
\end{abstract}

\section{Introduction}
\label{sec:introduction}

Joint-Embedding Predictive Architectures (JEPAs) learn representations by predicting or aligning related observations in a representation space rather than reconstructing them in the observation space~\citep{lecun2022path}.
To prevent collapse, recent methods constrain the marginal distribution of the learned representations.
LeJEPA targets an isotropic Gaussian distribution, whereas SPHERE-JEPA targets the uniform distribution on a hypersphere~\citep{balestriero2025lejepa,nicollier2026spherejepa}.
These choices are typically motivated by optimization or downstream performance, but they also implicitly impose a geometric prior on the learned representation space.

This raises a fundamental question of identifiability.
Training such a JEPA implicitly assumes that observations are generated from an underlying latent world through an unknown map \(g\) and that positive pairs encode proximity within this world; the identifiability question is whether the learned encoder \(f\) inverts this observation process, so that \(f\circ g\) recovers the latent state up to a linear transformation.
\citet{klindt2026doeslejepalearnworld} study this question in the Euclidean case \(\mathcal M=\mathbb R^d\), where positive pairs are formed by independently adding noise to each latent coordinate while preserving the latent distribution.
They show that, within this Euclidean setting, the Gaussian is the only latent distribution for which their framework guarantees linear recovery, and they also provide guarantees for approximate recovery.

Many latent variables, however, are non-Euclidean: rotations, directions, phases, and articulated configurations naturally lie on manifolds.
On such spaces, additive perturbations are not intrinsically defined, so the Euclidean formulation does not directly apply.
More broadly, when information about the geometry of the latent variables is available, it should guide the choice of representation target: a target with incompatible geometry can force the representation to distort the latent space.
Figure~\ref{fig:geometry-teaser} illustrates this effect.

%Figure~\ref{fig:geometry-teaser} illustrates this effect.
%When the target geometry matches the latent world, the learned representation preserves its global structure; when it does not, the global structure of the latent space can be distorted.

\begin{figure*}[t]
    \centering
    \setlength{\tabcolsep}{2pt}
    \renewcommand{\arraystretch}{0.82}
    \begin{tabular}{@{}lc|ccc@{}}
        &
        \scriptsize True latent space
        & \scriptsize Gaussian target
        & \scriptsize Spherical target
        & \scriptsize Torus target
        \\
        \rotatebox{90}{\scriptsize \hspace{3.5em}Gaussian world}
        &
        \includegraphics[width=0.22\textwidth]{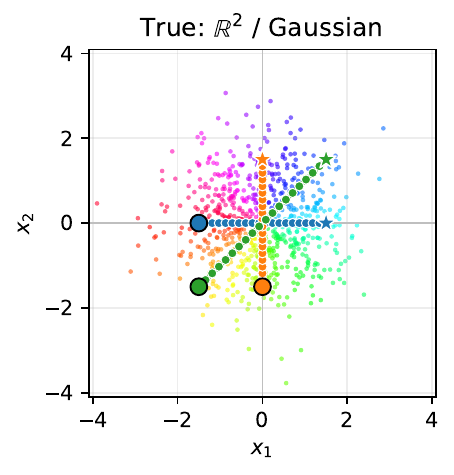}
        &
        \includegraphics[width=0.22\textwidth]{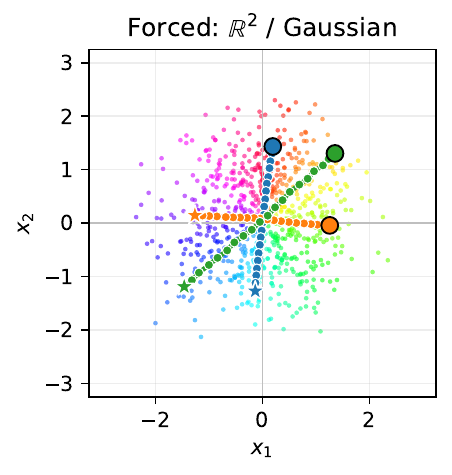}
        &
        \includegraphics[width=0.22\textwidth]{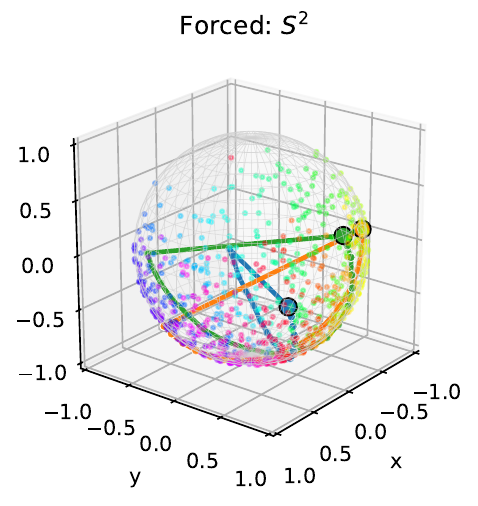}
        &
        \includegraphics[width=0.22\textwidth]{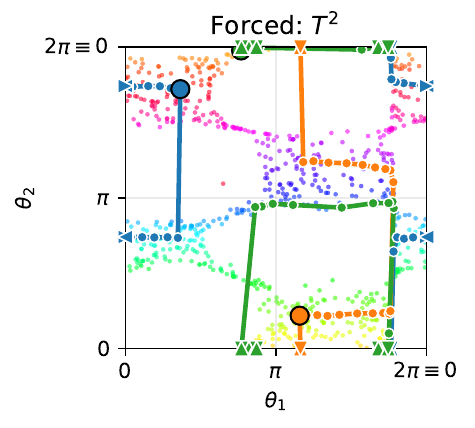}
        \\
        \rotatebox{90}{\scriptsize \hspace{3em}Spherical world}
        &
        \includegraphics[width=0.22\textwidth]{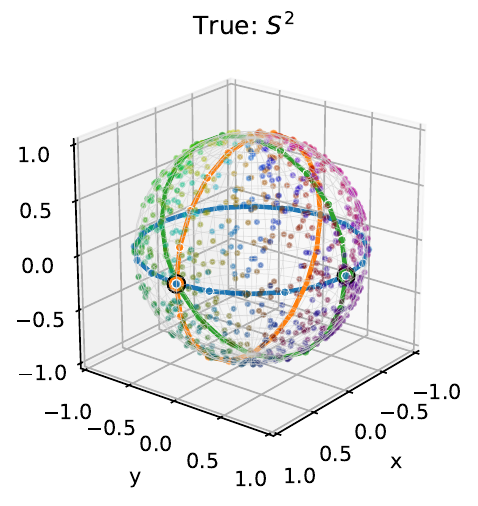}
        &
        \includegraphics[width=0.22\textwidth]{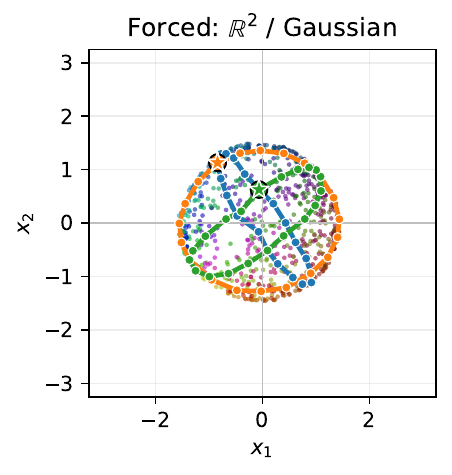}
        &
        \includegraphics[width=0.22\textwidth]{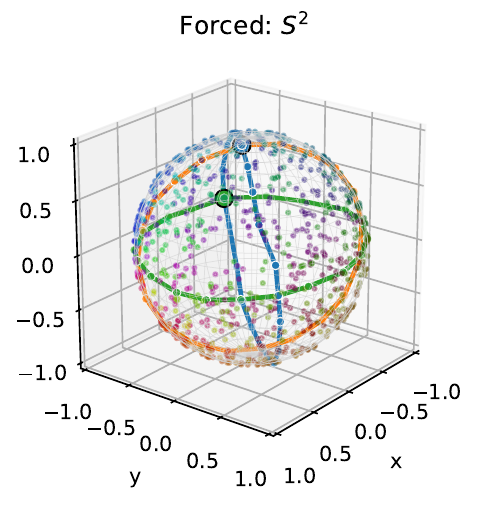}
        &
        \includegraphics[width=0.22\textwidth]{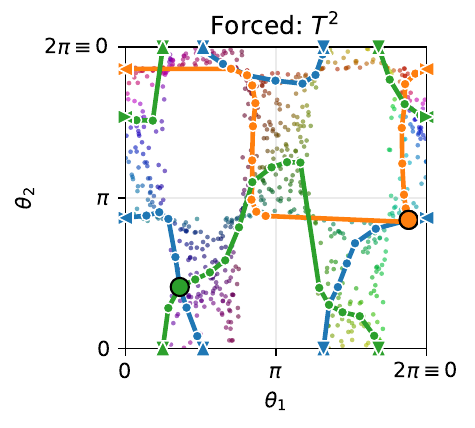}
        \\
        \rotatebox{90}{\scriptsize \hspace{3em}Toroidal world}
        &
        \includegraphics[width=0.22\textwidth]{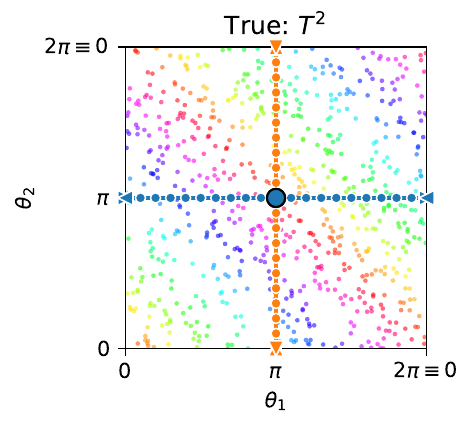}
        &
        \includegraphics[width=0.22\textwidth]{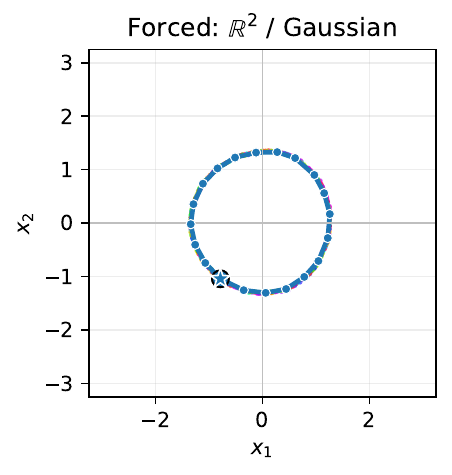}
        &
        \includegraphics[width=0.22\textwidth]{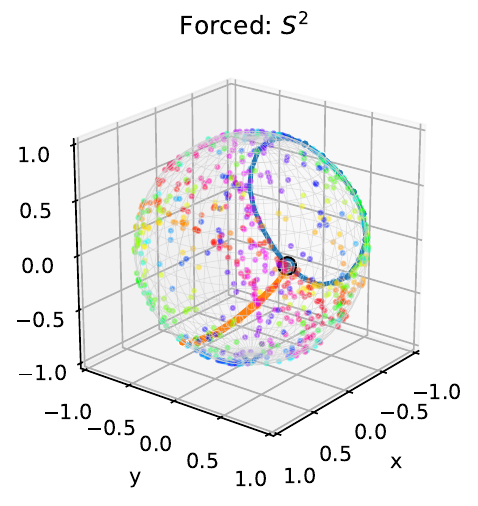}
        &
        \includegraphics[width=0.22\textwidth]{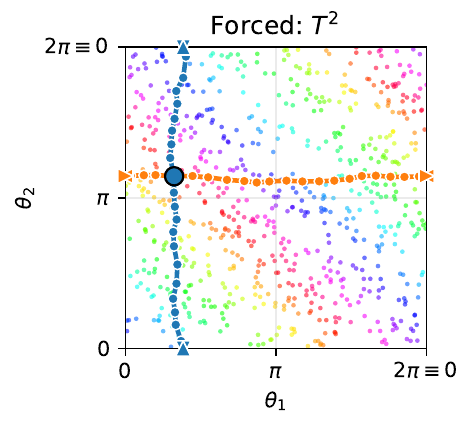}
    \end{tabular}
    \caption{
        Effect of target geometry on the learned representation.
        Rows correspond to three observation-generating systems: a Cartesian robot with a two-dimensional Gaussian latent position, an object with a spherical viewing direction, and a two-joint arm with a toroidal latent configuration.
        The first column shows the corresponding true latent space.
        In every other panel, a JEPA is trained on observations generated from the world in that row, with the representation constrained to match the target distribution indicated by the column.
        Colors and the green, blue, and orange paths track the same latent states across panels.
        For the three learned-representation columns, matched world--target pairs preserve the reference paths, whereas mismatched targets distort them.
    }
    \label{fig:geometry-teaser}
\end{figure*}

Building on \citet{klindt2026doeslejepalearnworld}, we extend identifiability theory from Euclidean latent spaces to distributions supported on embedded Riemannian manifolds, using intrinsic stochastic dynamics to generate positive pairs.
Our results show that Gaussian uniqueness does not extend universally to non-Euclidean latent spaces and that the representation target should reflect the geometry of the latent variables.

Our contributions are:
\begin{itemize}
    \item We derive conditions on manifold worlds under which alignment and exact distribution matching recover the latent state up to a linear transformation.
    Our results recover the Euclidean Gaussian world and identify the uniform sphere as a non-Euclidean identifiable world, with a tighter approximate-recovery guarantee than in the Gaussian case.
    \item We derive and implement geometry-aware heat-kernel MMD regularizers for the spherical, toroidal, and Clifford-torus target distributions used in our experiments.
    \item Experiments on Gaussian, spherical, toroidal, and high-dimensional Clifford-torus worlds show that compatible target geometries improve recovery when optimization succeeds.
    %\gf{We also recover hidden physical variables from observed dynamics in a simulator-access setting.}{}
\end{itemize}

\section{Related Work}
\label{sec:related_work}

\paragraph{Positioning.}
Distribution-matched JEPAs combine two sources of information: positive pairs specify which latent states should have similar representations, while the target distribution prevents collapse and prescribes the representation geometry.
Linear recovery depends on the compatibility of these two ingredients with the latent world.
We study this compatibility beyond Euclidean spaces using stationary manifold diffusions for pair generation, spectral analysis for identifiability, and geometry-aware kernels for practical distribution matching.

\paragraph{JEPAs and non-Euclidean representations.}
LeJEPA targets isotropic Gaussian representations, whereas SPHERE-JEPA targets the uniform distribution on a hypersphere~\citep{balestriero2025lejepa,nicollier2026spherejepa}.
More broadly, hyperspherical VAEs and manifold-valued latent-variable models incorporate known non-Euclidean structure into learned representations~\citep{davidson2018hyperspherical,falorsi2018homeomorphic}.
Unlike these generative approaches, we study when target-matched alignment identifies the underlying manifold-valued state.
Most closely related, \citet{klindt2026doeslejepalearnworld} establish Gaussian uniqueness for Euclidean worlds with stationary additive-noise transitions; we recover this Euclidean case while identifying non-Euclidean admissible worlds.

\paragraph{Identifiability and slow representations.}
Nonlinear Independent Component Analysis is generally unidentifiable from observations alone~\citep{hyvarinen1999nonlinear}.
Previous work restores identifiability by exploiting temporal structure or auxiliary variables~\citep{hyvarinen2016unsupervised,hyvarinen2019nonlinear}, conditional latent distributions~\citep{khemakhem2020variational}, or downstream predictive structure~\citep{roeder2021linear}.
Contrastive learning can likewise recover latent variables under assumptions on the data-generating process~\citep{zimmermann2021contrastive}.
In our setting, the additional information is provided by the positive-pair transition.
This connects alignment to Slow Feature Analysis~\citep{wiskott2002slow} and spectral views of self-supervised learning~\citep{balestriero2022contrastive}: alignment favors quantities that vary slowly across stationary Markov pairs, which are characterized by the leading nonconstant eigenfunctions of the transition operator.
We ask when these modes coincide with the ambient latent coordinates under exact distribution matching.

\paragraph{Manifold diffusions and spectral embeddings.}
The Euclidean additive-noise transitions studied by \citet{klindt2026doeslejepalearnworld} do not extend intrinsically to general manifolds.
We instead use stationary Langevin diffusions, which remain on the manifold and preserve its distribution~\citep{hsu2002stochastic,bakry2014analysis}.
Our spectral viewpoint is related to diffusion maps and Laplacian eigenmaps~\citep{nadler2005diffusion,belkin2003laplacian}, but our goal is different: we determine when a JEPA objective is forced to recover the ambient coordinates of the latent state rather than construct a new spectral embedding.

\paragraph{Kernel distribution matching.}
MMD and KSD provide kernel-based discrepancies for two-sample testing and goodness-of-fit, respectively~\citep{gretton2012kernel,liu2016kernelized}.
KerJEPA develops a general kernel-discrepancy framework for self-supervised learning, including MMD and KSD regularizers for Euclidean representations and hyperspherical uniformity~\citep{zimmermann2025kerjepa}.
Concurrently, \citet{nicollier2026expandingspherejepa} formulate full-dimensional MMD, KSD, and KL regularizers on the hypersphere and study heat and bandlimited spectral kernels.
Building on these developments, we use geometry-adapted heat-kernel MMDs for the Gaussian, spherical, and toroidal targets considered here; Appendix~\ref{app:spectral-mmd} gives their construction.

et quand ces\section{The World and the Learner}
\label{sec:world-learner}

Following the World--Learner formulation of \citet{klindt2026doeslejepalearnworld}, we separate the data-generating process from the learning objective.
The \emph{world} first samples a latent state \(Z\sim p\) on \(\mathcal M\).
Given \(Z=z\), it then samples a related state \(Z'\) from the conditional distribution \(K(z,\cdot)\), written \(Z'\mid Z=z\sim K(z,\cdot)\); thus, \(K\) determines how likely each state is to be selected as a positive partner.
The two states are converted into observations through an unknown map \(g\), which yields \((X,X')=(g(Z),g(Z'))\).
The \emph{learner} observes only \((X,X')\) and selects an encoder \(f\) that brings \(f(X)\) and \(f(X')\) close while matching the marginal distribution of \(f(X)\) to a prescribed target distribution.
This separation makes the identifiability question precise: For which worlds does every optimal induced latent map \(h=f\circ g\) recover \(Z\) up to a linear transformation?
We extend this formulation by allowing \(\mathcal M\) to be a Riemannian manifold and \(K\) to describe positive-pair dynamics intrinsic to \(\mathcal M\).
Figure~\ref{fig:world-learner-process} illustrates this pipeline for a two-joint robotic arm whose latent state lies on a torus.

\begin{figure*}[t]
    \centering
    \includegraphics[width=\textwidth]{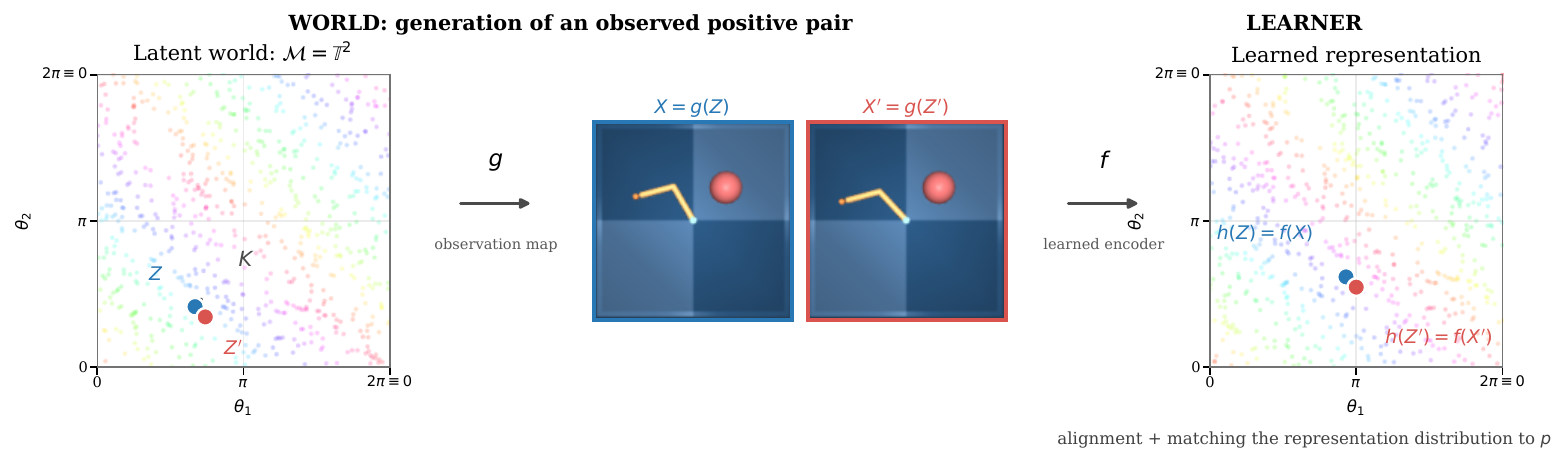}
    \caption{
        World--Learner framework illustrated on a two-joint robotic arm.
        The world samples a positive pair $(Z, Z')$ on the latent torus $\mathbb T^2$ according to $p=\operatorname{Unif}(\mathbb T^2)$ and $K$, then maps the two states to observations $(X, X')$ through the unknown observation map $g$.
        The learner observes only these images and trains an encoder $f$ to keep their representations close while matching their marginal distribution to the prescribed target.
        Blue denotes the initial state and red its positive partner throughout the pipeline.
    }
    \label{fig:world-learner-process}
\end{figure*}

\subsection{The World}
\label{sec:world}

Let $\mathcal X$ denote the observation space.

\begin{definition}[Latent world]
\label{def:latent-world}
A latent world is a tuple $\mathcal W=(\mathcal M,p,g,K)$, where $\mathcal M\subseteq\mathbb R^d$ is an embedded Riemannian manifold, $p$ is a probability measure on $\mathcal M$, $g:\mathcal M\to\mathcal X$ is a measurable observation map, and $K$ is a Markov transition kernel on $\mathcal M$.

An observed positive pair is generated according to
\[
Z\sim p,
\qquad
Z'\mid Z=z\sim K(z,\cdot),
\qquad
(X,X')=(g(Z),g(Z')).
\]
\end{definition}

To generate positive pairs intrinsically, we evolve $Z$ for a fixed time $t$ using a Langevin diffusion on $\mathcal M$.
For small $t$, this produces local perturbations that remain on the manifold while leaving $p$ invariant.
The following assumptions formalize this construction and ensure that the observations retain all information about the latent state.

\begin{assumption}[World]
\label{ass:world}
\; \vspace{-1.5em} \\
\begin{enumerate}
    \item \textbf{Positive density.}
    The measure \(p\) admits a smooth, strictly positive density with respect to the Riemannian volume on \(\mathcal M\).
    We use \(p\) to denote both the measure and its density when the meaning is clear.

    \item \textbf{Positive-pair dynamics.}
    For some fixed \(t>0\), the positive-pair kernel is \(K=K_t\), where \(K_t\) is the time-\(t\) transition kernel of the Langevin diffusion
    \begin{equation}
    \label{eq:sde}
        dZ_s
        =
        \nabla_{\mathcal M}\log p(Z_s)\,ds
        +
        \sqrt{2}\,dB_s^{\mathcal M},
    \end{equation}
    and \(B_s^{\mathcal M}\) is Brownian motion on \(\mathcal M\) with generator \(\frac12\Delta_{\mathcal M}\)~\citep{hsu2002stochastic}.

    \item \textbf{Observation recoverability.}
    The observation map \(g\) is injective \(p\)-almost surely and has a measurable inverse on its image.
    Equivalently, there exists a measurable map \(f_0:\mathcal X\to\mathcal M\) such that
    \[
        f_0\circ g=\operatorname{Id}_{\mathcal M}
        \qquad p\text{-almost surely}.
    \]
    Thus, although \(Z\) is not observed directly, no information about it is lost through \(g\).
\end{enumerate}
\end{assumption}

The infinitesimal generator of the diffusion in \eqref{eq:sde} is the weighted Laplacian
\begin{equation}
\label{eq:weighted-generator}
D_p\phi
=
\Delta_{\mathcal M}\phi
+
\left\langle
\nabla_{\mathcal M}\log p,
\nabla_{\mathcal M}\phi
\right\rangle_{\mathcal M}
=
\frac{1}{p}
\operatorname{div}_{\mathcal M}
\left(p\nabla_{\mathcal M}\phi\right).
\end{equation}
Its associated transition operator is
\[
(T_t\phi)(z)
\coloneqq
\mathbb E[\phi(Z_t)\mid Z_0=z]
=
\int_{\mathcal M}\phi(z')\,K_t(z,dz'),
\qquad
T_t=e^{tD_p}.
\]
The operators \(D_p\) and \(T_t=e^{tD_p}\) describe respectively the infinitesimal and time-\(t\) evolution of functions of the latent state and share the eigenfunctions underlying our analysis.

Under standard boundary or decay conditions, \(D_p\) is self-adjoint in \(L^2(p)\), and its Markov semigroup preserves \(p\)~\citep{bakry2014analysis}: \(\int_{\mathcal M}(T_t\phi)(z)\,p(dz)=\int_{\mathcal M}\phi(z)\,p(dz)\). Equivalently, \(\int_{\mathcal M}K_t(z,A)\,p(dz)=p(A)\) for every measurable \(A\subseteq\mathcal M\). Consequently, if \(Z\sim p\) and \(Z'\mid Z=z\sim K_t(z,\cdot)\), then \(Z'\sim p\); stationarity therefore follows from the positive-pair dynamics rather than constituting an additional assumption.

This framework includes the Euclidean Gaussian world, for which \eqref{eq:sde} is an Ornstein--Uhlenbeck process, and uniform compact worlds such as spheres and tori, for which the positive-pair dynamics reduce to intrinsic Brownian motion.
Canonical examples of latent worlds and their positive-pair dynamics are given in Appendix~\ref{app:canonical-worlds}.

\subsection{The Learner}
\label{sec:learner}

The learner is an encoder \(f:\mathcal X\to\mathcal M\), which induces a map between latent states and their representations,
\[
h=f\circ g:\mathcal M\to\mathcal M.
\]
Because the observation map \(g\) is recoverable, analyzing encoders on the observed data is equivalent, at the population level, to analyzing their induced latent maps \(h\).
Indeed, any measurable \(h\) can be realized on the support of the observations by setting \(f=h\circ g^{-1}\).
In particular, the identity map is feasible.

LeJEPA and SPHERE-JEPA combine two objectives: representations of positive pairs should remain close, and their marginal distribution should match a prescribed target distribution to prevent collapse~\citep{balestriero2025lejepa,nicollier2026spherejepa}.
Following the population formulation of \citet{klindt2026doeslejepalearnworld}, we idealize the second objective as exact distribution matching.
Our theory considers the matched setting, in which the representation space is \(\mathcal M\) and the target distribution is the latent distribution \(p\).
We consider the distribution-preserving alignment problem
\[
\min_{\substack{h:\mathcal M\to\mathcal M\\ h(Z)\sim p}}
\mathcal L_p(h),
\qquad
\mathcal L_p(h)\coloneqq
\mathbb E_{\substack{Z\sim p\\ Z'\mid Z\sim K(Z,\cdot)}}
\left[\|h(Z')-h(Z)\|^2\right].
\]

Here \(\|\cdot\|\) denotes the Euclidean norm in the ambient space \(\mathbb R^d\).
The objective brings the representations of positive latent states close.
The constraint requires the learned representations to have marginal distribution \(p\), so mapping every input to the same value is not allowed.
In the results below, we write this distribution-preservation condition compactly as \(h_{\#}p=p\).

\section{Main Results}
\label{sec:main-results}

The learning problem raises three questions.
Which latent worlds guarantee exact linear recovery?
How stable is this recovery when the alignment loss is only approximately optimal?
And which geometric properties strengthen or limit this stability?

%Our analysis relies on the spectrum of $-D_p$.
%Its eigenfunctions describe quantities that vary at different rates along the positive-pair dynamics: smaller eigenvalues correspond to functions that change more slowly and are therefore favored by the alignment objective.
%Linear recovery occurs when the slowest non-constant eigenfunctions are precisely the ambient coordinates of the latent state.
%Approximate recovery then depends on how well these linear modes are separated from the first competing nonlinear modes.

Intuitively, alignment favors functions of the latent state that vary most slowly along positive-pair transitions, while exact distribution matching rules out collapsed representations and fixes their marginal distribution.
Linear recovery is therefore possible when the ambient latent coordinates are exactly the complete set of slowest nonconstant functions.
The separation between these coordinate modes and the first nonlinear competitors controls stability: a larger spectral separation forces nonlinear representations to pay a larger alignment penalty.

The first two results give converse and forward conditions for exact linear recovery.
The third establishes stability under approximately optimal alignment while retaining exact distribution matching.
The final result shows how variation in the ambient radius ($\|z\|$) limits the separation between linear and nonlinear modes.

Throughout this section, let $\mathcal M\subseteq\mathbb R^d$ be a connected, properly embedded Riemannian manifold of intrinsic dimension $m$.
Let $p$ be a smooth, strictly positive stationary density, and let $K_t$ and $T_t=e^{tD_p}$ denote the positive-pair kernel and transition operator.
We assume that $-D_p$ is self-adjoint in $L^2(p)$ and has discrete spectrum.

We further assume that \(p\) is centered and isotropic:
\[
\mathbb E_{Z\sim p}[Z]=0,
\qquad
\operatorname{Cov}_{Z\sim p}(Z)=c_pI_d,
\]
for some \(c_p>0\).
These assumptions remove translation and anisotropic scaling from the remaining linear ambiguity.
Representations are required to preserve \(p\), written \(h_{\#}p=p\).

The following definition collects the three structural properties that will appear in both directions of our characterization.

\begin{definition}[Linearly admissible world]
\label{def:linearly-admissible}
For $\lambda>0$, we say that $(\mathcal M,p)$ is
\emph{linearly admissible at scale $\lambda$} if:

\begin{enumerate}
    \item \textbf{Density compatibility.}
    The density is the restriction to $\mathcal M$ of an isotropic Gaussian potential:
    \begin{equation}
    \label{eq:admissible-density}
        p(z)\propto
        \exp\!\left(-\frac{\lambda}{2}\|z\|^2\right),
        \qquad z\in\mathcal M.
    \end{equation}

    \item \textbf{Geometric compatibility.}
    The embedding satisfies
    \begin{equation}
    \label{eq:admissible-geometry}
        \Delta_{\mathcal M}z=-\lambda z^\perp,
    \end{equation}
    where $z^\perp$ is the component of the ambient position vector normal to $\mathcal M$.

    \item \textbf{Spectral ordering.}
    The ambient coordinate functions $z_1,\ldots,z_d$ span the complete first non-constant eigenspace of $-D_p$, with eigenvalue $\lambda$.
\end{enumerate}
\end{definition}

Together, the first two conditions ensure that each coordinate satisfies
$-D_pz_i=\lambda z_i$.
The third one says that these coordinates are the slowest non-constant functions of the latent state and that no nonlinear function varies equally slowly.
For a constant-radius manifold, such as a sphere or a Clifford torus, the density in \eqref{eq:admissible-density} is uniform.
Appendix~\ref{app:canonical-worlds} verifies these conditions for Gaussian spaces, spheres, equal-radius flat tori, and balanced products of spheres.

\paragraph{Necessary conditions.}
The spectral variational principle gives
$2dc_p(1-e^{-\lambda_1t})$ as a lower bound on the alignment loss.
Attaining this bound, or \emph{saturating} at it, means that an optimal representation uses only the slowest non-constant eigenfunctions.
The following result shows that if all such optimal representations are linear, then the world must be linearly admissible.

\begin{restatable}[Converse characterization]{theorem}{conversecharacterization}
\label{thm:converse-characterization}
Assume that the first non-constant eigenspace of $-D_p$ has eigenvalue $\lambda_1>0$ and
dimension $d$. Suppose that the distribution-preserving problem saturates the spectral lower bound,
\begin{equation}
\label{eq:spectral-tightness}
    \min_{\substack{h:\mathcal M\to\mathcal M\\ h_{\#}p=p}}\mathcal L_p(h)
    =2dc_p\left(1-e^{-\lambda_1t}\right),
\end{equation}
and that every distribution-preserving minimizer is linear in the ambient coordinates. Then
$(\mathcal M,p)$ is linearly admissible at scale $\lambda_1$.
\end{restatable}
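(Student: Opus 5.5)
The plan is to first make the spectral lower bound explicit and then read off the three admissibility conditions from the equality case. For $h$ with $h_\#p=p$, write $h=(h_1,\dots,h_d)$. Stationarity of $K_t$ gives
\[
\mathcal L_p(h)=\sum_{i=1}^d \mathbb E\big[(h_i(Z')-h_i(Z))^2\big]=2\sum_{i=1}^d\big(\|h_i\|_{L^2(p)}^2-\langle h_i,T_th_i\rangle_{L^2(p)}\big).
\]
Distribution preservation forces $\mathbb E[h(Z)]=0$ and $\operatorname{Cov}(h(Z))=c_pI_d$. Hence each $h_i$ is mean zero with $\|h_i\|^2=c_p$. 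Expanding $h_i$ in the eigenbasis of $-D_p$ (eigenvalues $0<\lambda_1\le\lambda_2\le\dots$ on the mean-zero part) gives
\[
\|h_i\|^2-\langle h_i,T_th_i\rangle=\sum_k(1-e^{-\lambda_k t})|\langle h_i,\phi_k\rangle|^2\ge(1-e^{-\lambda_1t})c_p.
\]
Equality holds iff $h_i$ lies in the first eigenspace $E_1$. Summing over $i$ recovers the bound $2dc_p(1-e^{-\lambda_1t})$. So, under \eqref{eq:spectral-tightness}, every minimizer has all components in $E_1$, and the minimum is attained.

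Next I use linearity to obtain spectral ordering. Take a minimizer $h$. It is linear, $h(z)=Az$ with $Az\in\mathcal M$ (affine terms vanish by centering), and it preserves covariance, so $AA^\top c_p=c_p I$ and $A$ is orthogonal. Its components $h_i=(Az)_i$ lie in $E_1$, so the coordinate functions $z_j=(A^\top h)_j$ also lie in $E_1$. These coordinates are linearly independent in $L^2(p)$ because $\operatorname{Cov}=c_pI_d$ is nondegenerate. Since $\dim E_1=d$, they span $E_1$. This gives condition 3 at scale $\lambda_1$.

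It remains to derive density and geometric compatibility from $-D_pz_i=\lambda_1z_i$. Decompose $\Delta_{\mathcal M}z=H$, which is the mean curvature vector and is normal. Also $\nabla_{\mathcal M}z_i=P_Te_i$, where $P_T$ is the tangent projection. Then
\[
D_pz=H+P_T\nabla\log p=-\lambda_1 z=-\lambda_1(z^\top+z^\perp).
\]
Splitting into normal and tangential parts gives $H=-\lambda_1z^\perp$, which is \eqref{eq:admissible-geometry}. It also gives $\nabla_{\mathcal M}\log p=-\lambda_1 z^\top=\nabla_{\mathcal M}(-\tfrac{\lambda_1}{2}\|z\|^2)$, since the intrinsic gradient of $\tfrac12\|z\|^2$ is $z^\top$. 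Because $\mathcal M$ is connected, $\log p+\tfrac{\lambda_1}{2}\|z\|^2$ is constant, which yields \eqref{eq:admissible-density}.

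I expect the main obstacle to be the second step, namely justifying that a linear minimizer exists and has orthogonal matrix. The hypothesis says every minimizer is linear but does not by itself provide one. I would argue that the minimum in \eqref{eq:spectral-tightness} is attained: saturation means some admissible $h$ reaches it, either by interpreting the min as attained or by noting that the identity is feasible and checking it is optimal once the coordinates are eigenfunctions. Care is also needed with "linear", which should be read as affine and then reduced to linear using $\mathbb E[Z]=0$. Orthogonality of $A$ then follows from $A c_pI A^\top=c_pI$.
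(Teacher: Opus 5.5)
Your proposal is correct and follows the paper's proof essentially step for step: the stationary expansion of $\mathcal L_p$, the spectral lower bound with equality iff the components lie in $E_1$, orthogonality of $A$ from the covariance constraint, and the normal/tangential split of $D_pz=-\lambda_1 z$. The only difference is that you use a per-component Rayleigh-quotient bound where the paper cites Ky Fan's principle, which gives the same bound and equality case here because $\dim E_1=d$. As in the paper, read attainment directly off the $\min$ in \eqref{eq:spectral-tightness}; your fallback of checking that the identity is optimal would be circular, since the identity attains the bound exactly when the coordinates already lie in $E_1$.
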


Thus, under these assumptions, exact linear recovery forces compatibility between the latent density, the embedding geometry, and the slowest spectral modes.

\paragraph{Sufficient conditions.}
Linear admissibility is also sufficient: under the three conditions above, no nonlinear representation can attain the optimal alignment loss.

\begin{restatable}[Forward linear identifiability]{theorem}{forwardidentifiability}
\label{thm:forward-identifiability}
Assume that $(\mathcal M,p)$ is linearly admissible at scale $\lambda$. Then the minimizers of the distribution-preserving alignment problem are exactly the maps
\begin{equation}
\label{eq:forward-minimizers}
h^\star(z)=Qz,
\qquad
Q\in O(d),\quad Q(\mathcal M)=\mathcal M.
\end{equation}
Consequently, every optimal distribution-preserving representation recovers the latent state up to a linear symmetry of $(\mathcal M,p)$.
\end{restatable}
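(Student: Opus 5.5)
The plan is to reduce the loss to a quadratic form in the transition operator and then apply the spectral variational principle coordinatewise. For any $h$ with $h_{\#}p=p$, write $h=(h_1,\dots,h_d)$. Stationarity gives $\mathbb E\|h(Z')\|^2=\mathbb E\|h(Z)\|^2$. Hence
\[
\mathcal L_p(h)=2\,\mathbb E\|h(Z)\|^2-2\sum_{i=1}^d\langle h_i,T_th_i\rangle_{L^2(p)} .
\]
The constraint $h_{\#}p=p$ fixes $\mathbb E[h(Z)]=0$ and $\operatorname{Cov}(h(Z))=c_pI_d$. So each $h_i$ is orthogonal to constants, has $\|h_i\|^2_{L^2(p)}=c_p$, and the $h_i$ are mutually orthogonal in $L^2(p)$. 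Since $-D_p$ is self-adjoint with discrete spectrum, $T_t=e^{tD_p}$ has eigenvalues $e^{-\mu_k t}$. On the orthogonal complement of constants its top eigenvalue is $e^{-\lambda t}$, with eigenspace $E_\lambda$. By the spectral ordering condition of Definition~\ref{def:linearly-admissible}, $E_\lambda=\operatorname{span}\{z_1,\dots,z_d\}$. Thus $\langle h_i,T_th_i\rangle\le e^{-\lambda t}c_p$, with equality iff $h_i\in E_\lambda$. Summing gives $\mathcal L_p(h)\ge 2dc_p(1-e^{-\lambda t})$.

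Next I check that the bound is attained. By the density and geometric compatibility conditions, $-D_pz_i=\lambda z_i$, as noted after Definition~\ref{def:linearly-admissible}. The identity is feasible, since $\operatorname{id}_{\#}p=p$, and it uses exactly these eigenfunctions, so it attains the bound. The same holds for any $h(z)=Qz$ with $Q\in O(d)$ and $Q(\mathcal M)=\mathcal M$, provided $Q_{\#}p=p$. This pushforward condition is where I use that $p$ is radial, $p\propto e^{-\lambda\|z\|^2/2}$: it is invariant under isometries. The Riemannian volume on $\mathcal M$ is also invariant, because $Q$ restricted to $\mathcal M$ is an isometry of the induced metric. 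Each component $(Qz)_i=\sum_j Q_{ij}z_j$ lies in $E_\lambda$, so equality holds.

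Conversely, let $h$ be a minimizer. Equality in every coordinate forces $h_i\in E_\lambda$ $p$-a.e. So $h(z)=Az$ for some $d\times d$ matrix $A$; the constant term vanishes because $h$ is centered.
\begin{itemize}
\item \emph{$A$ is orthogonal.} The covariance constraint gives $A(c_pI_d)A^\top=c_pI_d$, so $AA^\top=I$ and $A\in O(d)$.
\item \emph{$A$ preserves $\mathcal M$.} Since $h$ maps into $\mathcal M$ $p$-a.e., and $p$ has full support with $h$ continuous, we get $A(\mathcal M)\subseteq\mathcal M$. To upgrade this to equality, observe that $A\mathcal M$ is a closed embedded submanifold of the same dimension inside the connected manifold $\mathcal M$. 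Proper embedding makes it closed, invariance of domain makes it open, so it equals $\mathcal M$. Alternatively, $A_{\#}p=p$ with full support of $p$ gives density of the image, and closedness finishes it.
\end{itemize}

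I expect the main obstacle to be this last step together with the bookkeeping that linear independence is not needed. The variational step only needs the $h_i$ to lie in $E_\lambda$. However, one must make sure $E_\lambda$ is exactly the span of the coordinates, with no extra nonlinear eigenfunctions. This is precisely the spectral ordering condition, and it is what excludes nonlinear minimizers. The remaining subtlety is handling "a.e." versus everywhere, which continuity of $z\mapsto Az$ and full support of $p$ resolve.
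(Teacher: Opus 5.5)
Your proposal is correct and follows essentially the same route as the paper: the stationary expansion of $\mathcal L_p$, a spectral lower bound whose equality case forces every component of a minimizer into $\operatorname{span}\{z_1,\dots,z_d\}$, the covariance constraint giving $A\in O(d)$, a full-support argument for $A(\mathcal M)=\mathcal M$, and radial invariance of $p$ together with invariance of the Riemannian volume for the converse direction. The differences are only cosmetic: you use a per-coordinate Rayleigh bound $\langle h_i,T_th_i\rangle\le e^{-\lambda t}c_p$ instead of Ky Fan's principle, which as you note makes mutual orthogonality of the components unnecessary; and you give a continuity and invariance-of-domain argument alongside the paper's support argument that $A_{\#}p=p$ implies $A(\mathcal M)=\mathcal M$.
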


Here $O(d)$ denotes the orthogonal group, while $Q(\mathcal M)=\mathcal M$ restricts $Q$ to symmetries of the latent manifold.
Thus, the remaining ambiguity consists only of rotations or reflections that preserve the world.

\paragraph{Approximate recovery.}
Exact optimization is an idealization.
Let $\lambda$ denote the eigenvalue of the linear coordinate functions, and let $\lambda_{\mathrm{nl}}>\lambda$ denote the eigenvalue of the first competing nonlinear modes.
The corresponding quantities $\rho=e^{-\lambda t}$ and
$\rho_{\mathrm{nl}}=e^{-\lambda_{\mathrm{nl}}t}$ measure how strongly these modes remain correlated across positive pairs.
A larger separation $\rho-\rho_{\mathrm{nl}}$ makes linear and nonlinear representations easier to distinguish.

\begin{restatable}[Approximate linear identifiability]{theorem}{approximateidentifiability}
\label{thm:approximate-identifiability}
Assume that $(\mathcal M,p)$ is linearly admissible at scale $\lambda$, and let $\lambda_{\mathrm{nl}}>\lambda$ be the next distinct eigenvalue of $-D_p$. 
Set
\[
\rho=e^{-\lambda t},
\qquad
\rho_{\mathrm{nl}}=e^{-\lambda_{\mathrm{nl}}t}.
\]
Let $h:\mathcal M\to\mathcal M$ satisfy $h_{\#}p=p$ and suppose that
\begin{equation}
\label{eq:approximate-alignment-assumption}
    \mathcal L_p(h)
    \leq 2dc_p(1-\rho)+\varepsilon.
\end{equation}
Then there exists $A\in\mathbb R^{d\times d}$ such that
\begin{equation}
\label{eq:approximate-linear-bound}
    \|h-Az\|_{L^2(p)}^2
    \leq
    \frac{\varepsilon}{2(\rho-\rho_{\mathrm{nl}})}.
\end{equation}
Moreover, there exists $Q\in O(d)$ such that, with
$\eta=\varepsilon/[2(\rho-\rho_{\mathrm{nl}})]$,
\begin{equation}
\label{eq:approximate-orthogonal-bound}
    \|h-Qz\|_{L^2(p)}^2
    \leq \eta+\frac{\eta^2}{c_p}.
\end{equation}
The constant in \eqref{eq:approximate-linear-bound} is optimal at the
level of the underlying spectral inequality.
\end{restatable}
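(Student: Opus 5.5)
The plan is to rewrite the alignment loss spectrally and then split $h$ into its component on the coordinate eigenspace and a remainder. Since $h_{\#}p=p$ and $Z'\sim p$ by stationarity, we have $\mathbb E\|h(Z)\|^2=\mathbb E\|h(Z')\|^2=\mathbb E_p\|Z\|^2=dc_p$ (using centering). Hence
\[
\mathcal L_p(h)=2dc_p-2\sum_{i=1}^d\langle h_i,T_th_i\rangle_{L^2(p)} .
\]
Also $\mathbb E[h_i(Z)]=\mathbb E[Z_i]=0$, so each $h_i$ is orthogonal to constants.

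Expand each component $h_i$ in the eigenbasis of $-D_p$, writing $h_i=(Az)_i+r_i$. Here $Az$ is the $L^2(p)$-orthogonal projection of $h$ onto $E_\lambda=\mathrm{span}(z_1,\dots,z_d)$; spectral ordering guarantees this is the whole first nonconstant eigenspace. The remainder $r_i$ is orthogonal to constants and to $E_\lambda$, so it lies in eigenspaces with eigenvalue at least $\lambda_{\mathrm{nl}}$. Self-adjointness and $T_t=e^{tD_p}$ then give
\[
\langle h_i,T_th_i\rangle\le \rho\|(Az)_i\|^2+\rho_{\mathrm{nl}}\|r_i\|^2=\rho\|h_i\|^2-(\rho-\rho_{\mathrm{nl}})\|r_i\|^2 .
\]
Summing over $i$ and using $\sum_i\|h_i\|^2=dc_p$ yields $\mathcal L_p(h)\ge 2dc_p(1-\rho)+2(\rho-\rho_{\mathrm{nl}})\|h-Az\|^2_{L^2(p)}$. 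Combined with \eqref{eq:approximate-alignment-assumption}, this gives \eqref{eq:approximate-linear-bound}.

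For optimality of the constant, I would exhibit $h$ equal to a normalized mixture of a coordinate mode and a $\lambda_{\mathrm{nl}}$-eigenfunction; every inequality above is then an equality. The caveat is that such an $h$ need not satisfy $h_{\#}p=p$, which is why the claim is stated only at the level of the spectral inequality.

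For \eqref{eq:approximate-orthogonal-bound}, let $\eta'=\|r\|^2\le\eta$. Since $\mathrm{Cov}(Z)=c_pI$, we have $\|Az\|^2_{L^2(p)}=c_p\|A\|_F^2=dc_p-\eta'$. Take the polar decomposition $A=QS$ with $Q\in O(d)$ and $S\succeq0$. Then, using $\langle r,Qz\rangle=0$,
\[
\|h-Qz\|^2=\eta'+c_p\|A-Q\|_F^2=\eta'+c_p\|S-I\|_F^2 .
\]
The eigenvalues $s_j\ge 0$ of $S$ satisfy $\sum_j s_j^2=d-\eta'/c_p$. We also need $s_j\le1$, and I would get this from distribution matching. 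For a unit vector $u$, $\mathrm{Var}(u^\top h)=c_p$ because $h(Z)\sim p$ is isotropic, and this variance is at least the variance of its projection onto $E_\lambda$, which is $c_p\|A^\top u\|^2$. So $\|A\|_{\mathrm{op}}\le1$ and hence $0\le s_j\le1$.

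The final estimate is the step I expect to be delicate. We have $\sum_j(1-s_j)^2\le\sum_j(1-s_j)(1+s_j)=\sum_j(1-s_j^2)=\eta'/c_p$, which already gives $\|h-Qz\|^2\le 2\eta'$. To reach the sharper form $\eta+\eta^2/c_p$, I would use that $(1-s)^2=(1-s^2)^2/(1+s)^2$ and bound $\sum_j(1-s_j)^2$ by $\sum_j(1-s_j^2)^2\le\big(\sum_j(1-s_j^2)\big)^2=(\eta'/c_p)^2$. Here $1+s\ge1$ handles the first step, and nonnegativity of the terms $1-s_j^2$ handles the second. This gives $\|h-Qz\|^2\le\eta'+\eta'^2/c_p\le\eta+\eta^2/c_p$.
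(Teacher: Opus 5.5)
Your proposal is correct and follows essentially the same route as the paper. It uses the same stationary spectral rewriting and $L^2(p)$-projection onto $\operatorname{span}\{z_1,\dots,z_d\}$ for \eqref{eq:approximate-linear-bound}, and the same polar-decomposition argument for \eqref{eq:approximate-orthogonal-bound}; your variance argument giving $s_j\le 1$ and your bound $\sum_j(1-s_j^2)^2\le\bigl(\sum_j(1-s_j^2)\bigr)^2$ are the singular-value form of the paper's identity $c_pI_d=c_pAA^\top+R$ with $R\succeq 0$ and $\|R\|_F\le\operatorname{tr}R$.
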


Here \(\varepsilon\) measures the excess alignment loss above the optimum under exact distribution preservation.
The first bound shows that a nearly optimal representation must be close to a linear map, with an error controlled by
\(\varepsilon/(\rho-\rho_{\mathrm{nl}})\).
The second strengthens this conclusion by showing closeness to an orthogonal transformation of the latent coordinates.
Appendix~\ref{app:approximate-bound-experiment} empirically examines this bound on learned Clifford-torus representations.

\paragraph{The role of the ambient radius.}
The final result identifies a universal nonlinear competitor.
When $\|z\|$ varies across the manifold, the centered squared radius is a non-constant function of the latent state and limits how far the first nonlinear eigenvalue can lie above the linear one.

\begin{restatable}[Radial obstruction and spherical rigidity]{theorem}{radialobstruction}
\label{thm:radial-obstruction}
Let $(\mathcal M^m,p)$ be linearly admissible at scale $\lambda$.
Then $r(z)=\|z\|^2-m/\lambda$ satisfies $-D_pr=2\lambda r$. 
Hence, either $\mathcal M\subseteq\mathbb S^{d-1}(\sqrt{m/\lambda})$, or $\lambda_{\mathrm{nl}}\leq2\lambda$, where $\lambda_{\mathrm{nl}}>\lambda$ denotes the next distinct eigenvalue of $-D_p$.
If moreover $m=d-1$ and $\mathcal M$ is closed, then $\lambda_{\mathrm{nl}}>2\lambda$ implies $\mathcal M=\mathbb S^{d-1}(\sqrt{(d-1)/\lambda})$.
\end{restatable}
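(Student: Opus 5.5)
The plan is a direct computation of $D_p$ applied to $\|z\|^2$, followed by a spectral case split and, for the hypersurface statement, a topological argument.

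\emph{Step 1: computing $D_p\|z\|^2$.} I use the two ingredients of linear admissibility, \eqref{eq:admissible-density} and \eqref{eq:admissible-geometry}. For the Laplacian term, apply the product rule to $\|z\|^2=\sum_i z_i^2$:
\[
\Delta_{\mathcal M}\|z\|^2=2\sum_{i=1}^d\|\nabla_{\mathcal M}z_i\|^2+2\sum_{i=1}^d z_i\,\Delta_{\mathcal M}z_i .
\]
The tangential gradient of the coordinate $z_i$ is $P e_i$, where $P$ is the orthogonal projection onto $T_z\mathcal M$. Hence $\sum_i\|P e_i\|^2=\operatorname{tr}P=m$. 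By geometric compatibility the second sum equals $-2\lambda\langle z,z^\perp\rangle=-2\lambda\|z^\perp\|^2$.

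For the drift term, density compatibility gives $\nabla_{\mathcal M}\log p=-\lambda z^\top$, where $z^\top=Pz$. Also $\nabla_{\mathcal M}\|z\|^2=2z^\top$, so the drift contributes $-2\lambda\|z^\top\|^2$. Adding the two terms and using $\|z^\top\|^2+\|z^\perp\|^2=\|z\|^2$ gives
\[
D_p\|z\|^2=2m-2\lambda\|z\|^2 .
\]
Since constants are annihilated by $D_p$, this yields $-D_p r=2\lambda r$ for $r=\|z\|^2-m/\lambda$.

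\emph{Step 2: the dichotomy.}
\begin{itemize}
\item If $r\equiv 0$ on $\mathcal M$, then $\|z\|^2=m/\lambda$ everywhere, i.e.\ $\mathcal M\subseteq\mathbb S^{d-1}(\sqrt{m/\lambda})$.
\item Otherwise $r$ is a nonzero element of $L^2(p)$ with $-D_pr=2\lambda r$. Here $L^2$-membership follows from compactness, or from the Gaussian decay of $p$ in \eqref{eq:admissible-density} in the noncompact case. Membership in the domain of the self-adjoint operator follows from the same standing decay/boundary conditions used to make $-D_p$ self-adjoint. So $2\lambda$ is an eigenvalue of $-D_p$.
\end{itemize}
In the second case, since $2\lambda>\lambda$ and $\lambda_{\mathrm{nl}}$ is the next distinct eigenvalue above $\lambda$, we get $\lambda_{\mathrm{nl}}\le 2\lambda$. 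A nonzero $r$ is automatically nonconstant, because a constant eigenfunction would force $2\lambda r=0$.

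\emph{Step 3: spherical rigidity for closed hypersurfaces.} If $m=d-1$, $\mathcal M$ is closed, and $\lambda_{\mathrm{nl}}>2\lambda$, then Step 2 places $\mathcal M$ inside $S=\mathbb S^{d-1}(\sqrt{(d-1)/\lambda})$. Both are $(d-1)$-dimensional manifolds without boundary, so by invariance of domain the inclusion $\mathcal M\hookrightarrow S$ is an open map and its image is open in $S$. The image is also compact, hence closed in $S$. For $d\ge2$ the sphere $S$ is connected, and $\mathcal M$ is nonempty, so $\mathcal M=S$.

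I expect the main obstacle to be the careful bookkeeping in Step 1. The key identity is $\sum_i\|\nabla_{\mathcal M}z_i\|^2=m$, together with the correct split of the position vector into $z^\top$ and $z^\perp$; the cancellation that produces a pure multiple of $\|z\|^2$ depends on both admissibility conditions entering with the same constant $\lambda$. A lesser technical point is justifying that $r$ lies in the operator domain in the noncompact case, which I would handle by appealing to the standing assumptions of this section rather than proving it afresh.
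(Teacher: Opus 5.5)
Your proposal is correct and follows the paper's proof essentially step for step. It uses the same computation $D_p\|z\|^2=2m-2\lambda\|z\|^2$ from the density and geometric compatibility conditions, the same dichotomy on whether $r$ vanishes identically, and the same open-and-closed argument in the connected sphere. The only differences are cosmetic: you spell out $\sum_i\|\nabla_{\mathcal M}z_i\|^2=m$ and the $L^2$/domain membership of $r$, and you invoke invariance of domain where the paper uses that the inclusion is a local diffeomorphism.
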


If the ambient radius varies, the radial function produces a nonlinear eigenmode at eigenvalue $2\lambda$, implying $\lambda_{\mathrm{nl}}\leq2\lambda$.
On a constant-radius manifold this mode vanishes.
Moreover, among closed hypersurfaces, the sphere is the only linearly admissible geometry whose first nonlinear eigenvalue can lie strictly above $2\lambda$.

\begin{corollary}[Gaussian versus spherical separation]
For the Euclidean Gaussian world,
$\lambda_{\mathrm{nl}}/\lambda=2$, whereas for the uniform spherical world,
$\lambda_{\mathrm{nl}}/\lambda=2d/(d-1)>2$. Thus, the spherical world has a strictly larger relative linear--nonlinear spectral separation.
\end{corollary}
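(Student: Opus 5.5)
The corollary reduces to computing the spectrum of $-D_p$ in the two canonical worlds and reading off $\lambda$ and $\lambda_{\mathrm{nl}}$. Linear admissibility of both worlds is verified in Appendix~\ref{app:canonical-worlds}, so $\lambda$ is the eigenvalue of the coordinate functions. It remains to identify the next distinct eigenvalue in each case.

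\textbf{Gaussian world.} Take $\mathcal M=\mathbb R^d$ and $p\propto\exp(-\lambda\|z\|^2/2)$. Then
\[
D_p=\Delta-\lambda\,z\cdot\nabla ,
\]
which is the Ornstein--Uhlenbeck generator. Its eigenfunctions are the tensorized Hermite polynomials $\prod_i H_{k_i}(\sqrt{\lambda}\,z_i)$, with eigenvalue $\lambda\sum_i k_i$. Total degree $1$ gives the coordinates at $\lambda$, and degree $2$ gives the next distinct eigenvalue $2\lambda$. Completeness of the Hermite basis in $L^2(p)$ ensures nothing lies strictly between. So $\lambda_{\mathrm{nl}}/\lambda=2$, matching the bound $\lambda_{\mathrm{nl}}\le2\lambda$ of Theorem~\ref{thm:radial-obstruction}. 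The radial function $\|z\|^2-d/\lambda$ is exactly such a degree-two mode.

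\textbf{Spherical world.} Take $\mathcal M=\mathbb S^{d-1}(R)$ with $p$ uniform, so $D_p=\Delta_{\mathbb S^{d-1}(R)}$. By the classical spherical harmonic decomposition, degree-$k$ harmonics have eigenvalue
\[
\frac{k(k+d-2)}{R^2},
\]
and they are complete in $L^2$. Degree $1$ (the coordinates) gives $\lambda=(d-1)/R^2$, consistent with $R=\sqrt{(d-1)/\lambda}$ in Theorem~\ref{thm:radial-obstruction}. Degree $2$ gives $\lambda_{\mathrm{nl}}=2d/R^2$. Hence
\[
\frac{\lambda_{\mathrm{nl}}}{\lambda}=\frac{2d}{d-1}>2 \qquad\text{for } d\ge2 .
\]
The radial mode of Theorem~\ref{thm:radial-obstruction} vanishes identically here, which is why the ratio can exceed $2$.

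\textbf{Main obstacle.} The only delicate point is justifying that $\lambda_{\mathrm{nl}}$ really is the degree-two eigenvalue, i.e.\ that no eigenfunction lies strictly between the linear and quadratic levels. This follows from completeness of the Hermite and spherical-harmonic bases respectively, together with the fact that their eigenvalues are increasing in degree. The last step is to compare the separations: the spherical ratio $2d/(d-1)$ strictly exceeds the Gaussian ratio $2$ for every finite $d$.
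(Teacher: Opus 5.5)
Your proposal is correct and follows essentially the same route as the paper, which supports the corollary through the canonical-world computations of Appendix~\ref{app:canonical-worlds}. There, the Ornstein--Uhlenbeck (Hermite) spectrum gives $\lambda_{\mathrm{nl}}=2\lambda$ for the Gaussian, and the spherical-harmonic spectrum of $\mathbb S^{m}(r)$ gives $\lambda=m/r^2$ and $\lambda_{\mathrm{nl}}=2(m+1)/r^2$, which is your formula with $m=d-1$. Your appeal to completeness of the Hermite and spherical-harmonic bases to rule out intermediate eigenvalues is the point the paper leaves implicit; take $H_k$ to be the probabilists' Hermite polynomials, so that $H_k(\sqrt{\lambda}\,z)$ is indeed an eigenfunction with eigenvalue $\lambda k$.
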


The comparison is therefore concrete: the first nonlinear competitor appears exactly at twice the linear eigenvalue in the Gaussian world, but later in the spherical world.
At fixed linear-mode correlation \(\rho\), this larger separation yields a tighter approximate-recovery guarantee for the sphere.

\section{Why the Target Geometry Matters}
\label{sec:target-geometry}

A Gaussian target is not geometrically neutral.
Although an isotropic Gaussian concentrates near a sphere as dimension increases, its radius remains variable at every finite dimension.
Consequently, the first nonlinear mode appears at twice the linear eigenvalue, so that \(\lambda_{\mathrm{nl}}/\lambda=2\).
On the uniform sphere, the radius is constant and
\(\frac{\lambda_{\mathrm{nl}}}{\lambda}=2+\frac{2}{d-1}>2\).
At fixed linear-mode correlation \(\rho\), the sphere therefore yields a tighter approximate-recovery bound, although this relative advantage vanishes as \(d\to\infty\); Appendix~\ref{app:spectral-gap-comparison} quantifies the difference.
Beyond identifiability, spherical representations also enjoy favorable worst-case guarantees for \(k\)-NN, linear probing, and kernel ridge regression~\citep{nicollier2026spherejepa}.

More generally, our results do not identify a universally optimal target.
When the latent world is known and linearly admissible, matching the representation target to its latent distribution makes the latent coordinates identifiable up to the symmetries of the world.
For unknown worlds, choosing a Gaussian, spherical, or other target amounts to selecting an inductive bias: Gaussian targets suit Euclidean Gaussian dynamics, but may distort representations of compact latent variables such as directions, phases, and rotations.

\section{Experiments}
\label{sec:experiments}

%\gf{We address three questions: whether matching the target to the latent geometry improves linear recovery, whether simulator-generated positive pairs can recover hidden physical variables from trajectories, and whether target mismatch persists in high dimension.}{
We address two main questions: whether matching the target to the latent geometry improves linear recovery, and whether target mismatch persists in high dimension.%}

\paragraph{Protocol.}
All experiments combine alignment with target-distribution regularization.
The geometry-comparison experiments use the normalized heat-kernel MMD of Appendix~\ref{app:spectral-mmd}; %\gf{the two-body experiment uses Gaussian SIGReg;}{} 
and the high-dimensional experiment uses SIGReg for Gaussian targets and product heat-kernel MMD for the Clifford target.

For each world--target pair, we select only the regularization weight \(\alpha\) using seed \(0\): we freeze each candidate encoder, define \(Y=f(X)\), fit an affine ridge probe \(Y\!\to\!Z\), and maximize validation \(R^2\).
For compact targets, heat times are fixed a priori such that the normalized kernel value between maximally distant points is \(0.1\)–\(0.2\); they are not selected by validation.
We then retrain the selected configuration with seeds \(1,2,3\) and report held-out test means and sample standard deviations.
This oracle selection uses latent labels only to choose \(\alpha\), giving mismatched targets a favorable best-case comparison.
Appendix~\ref{app:loss-recovery} further shows that the high-recovery regime can be diagnosed without latent probes: it occurs only when both the invariance loss and the distribution-matching loss approach their self-supervised reference values.

Our theory concerns exact distribution matching in the matched population setting.
Finite-weight matched experiments approximate this constraint, whereas mismatched experiments are controlled stress tests outside the formal scope of the theorems.

\subsection{Matched and Mismatched Target Geometries}
\label{sec:geometry-experiment}

We test all nine combinations of three latent worlds and representation targets of intrinsic dimension two---a Gaussian Cartesian position, a spherical viewing direction, and a two-angle toroidal configuration.% (Figure~\ref{fig:dataset_samples}).
Their canonical embeddings have different ambient dimensions; we match intrinsic dimension because the continuous image of a two-dimensional latent space cannot have a full-dimensional Gaussian distribution in a higher-dimensional ambient space.
All targets are enforced using the geometry-specific heat-kernel MMDs of Appendix~\ref{app:spectral-mmd}.

%\begin{figure}
%    \centering
%    % Robot cartésien (Gaussian Cartesian position)
%    \begin{minipage}{0.2\textwidth}
%        \centering
%        \includegraphics[width=\linewidth]{cartesian_robot.pdf}
%        \caption*{Cartesian Robot}
%    \end{minipage}\hfill
%    % Objet 3D (Spherical viewing direction)
%    \begin{minipage}{0.2\textwidth}
%        \centering
%        \includegraphics[width=\linewidth]{object3d.pdf}
%        \caption*{3D Object}
%    \end{minipage}\hfill
%   % Bras articulé (Two-angle toroidal configuration)
%    \begin{minipage}{0.2\textwidth}
%        \centering
%        \includegraphics[width=\linewidth]{arm.pdf}
%        \caption*{Articulated Arm}
%    \end{minipage}
%    \caption{Samples from the three latent worlds evaluated: Gaussian Cartesian position (left), spherical viewing direction (center), and two-angle toroidal configuration (right).}
%    \label{fig:dataset_samples}
%\end{figure}

\begin{table}[t]
    \centering
    \small
    \setlength{\tabcolsep}{1pt}
    \caption{
    Held-out test \(R^2(Y\!\to\!Z)\), measuring linear recovery of the latent state from the learned representation.
    Except where indicated, values are means and sample standard deviations over three seeds.}
    \label{tab:geometry-results-y-to-z}
    \begin{tabular}{lccc}
        \toprule
        Latent world
        & Gaussian
        & Sphere
        & Torus \\
        \midrule
        Gaussian
        & \(\mathbf{0.995 \pm 0.001}\)
        & \(0.909 \pm 0.009\)
        & \(0.606 \pm 0.175\) \\
        Sphere
        & \(0.669 \pm 0.005\)
        & \(\mathbf{0.998 \pm 0.001}\)
        & \(0.921 \pm 0.003\) \\
        Torus
        & \(0.493 \pm 0.011\)
        & \(0.737 \pm 0.004\)
        & \(\mathbf{0.996\pm0.001}^{\dagger}\) \\
        \bottomrule
    \end{tabular}
    \vspace{2pt}
\parbox{\columnwidth}{\scriptsize
    Columns indicate the representation target.
    \(\dagger\) Over ten seeds, five runs achieve near-perfect linear recovery; the reported value is the mean and sample standard deviation over these five runs. 
    The remaining runs are discussed in the text.}
\end{table}

Table~\ref{tab:geometry-results-y-to-z} shows that matching the representation target to the latent geometry yields the best linear recovery for each latent world.
The matched Gaussian and spherical targets recover their latent states almost perfectly, with \(R^2(Y\!\to\!Z)=0.995\pm0.001\) and \(0.998\pm0.001\), respectively.
For the toroidal world, optimization is sensitive to initialization.
Five out of ten runs achieve near-perfect linear recovery, with \(R^2(Y\!\to\!Z)=0.996\pm0.001\) and \(L_{\mathrm{inv}}=0.052\pm0.001\).
The other five runs obtain substantially lower recovery, \(R^2(Y\!\to\!Z)=0.597\pm0.187\), together with a higher invariance loss, \(L_{\mathrm{inv}}=0.093\pm0.017\).
This simultaneous increase in invariance loss and decrease in linear recovery is consistent with unsuccessful runs becoming trapped in poor local minima.
Intuitively, such solutions may encode only one of the two angular coordinates; escaping them requires learning the missing periodic coordinate while preserving approximate target matching.
The lower scores therefore reflect an optimization failure rather than a limitation of the matched toroidal geometry.

The two probe directions reveal an important asymmetry under target mismatch.
For example, in the toroidal world, the Gaussian representation is almost linearly predictable from the latent state, with \(R^2(Z\!\to\!Y)=0.986\), yet it allows only limited linear recovery of that state, with \(R^2(Y\!\to\!Z)=0.493\).
Thus, a mismatched target can produce a representation that is nearly linear in \(Z\) without being linearly invertible.
Appendix~\ref{app:geometry-experiment} reports the complete results in both probe directions.

\subsection{Does Geometry Matching Remain Useful in High Dimension?}
\label{sec:morse-scaling}

We test whether the benefit of geometry matching persists as the latent dimension increases.
For each even intrinsic dimension \(d\in\{4,8,16,32,64,128\}\), we consider the balanced Clifford torus \(\mathcal C_d=\mathbb S^{d/2}\times\mathbb S^{d/2}\subset\mathbb R^{d+2}\).

The latent state is the unobserved initial condition of two particles interacting through a Morse potential, and the encoder observes only fifty delayed states of their trajectory.
Positive pairs are generated by the stationary diffusion in \eqref{eq:sde} on \(\mathcal C_d\), with the transition time chosen to yield linear-mode correlation \(\rho=0.95\).

We compare the matched target with \(\mathcal N(0,I_d)\), which matches the intrinsic degrees of freedom, and with the favorable baseline \(\mathcal N(0,I_{d+2})\), whose extra coordinates can retain the canonical embedding of \(\mathcal C_d\subset\mathbb R^{d+2}\) under finite-weight Gaussian regularization.

The matched Clifford target achieves the highest recovery at every dimension.
At \(d=128\), it reaches \(R^2(Y\!\to\!Z)=0.9667\), compared with \(0.7948\) for \(\mathcal N(0,I_{d+2})\) and \(0.7465\) for \(\mathcal N(0,I_d)\).
Thus, removing the Gaussian rank bottleneck improves recovery but does not close the gap with the matched target.

The main comparison uses SIGReg for Gaussian targets and product heat-kernel MMD for the Clifford target; replacing SIGReg with a Matérn-MMD Gaussian regularizer preserves the ordering.
See Appendix~\ref{app:morse-details}.

\section{Conclusion}
\label{sec:conclusion}

We extended the linear-identifiability theory of World--Learner from Euclidean additive-noise models to latent distributions supported on embedded Riemannian manifolds.
Our results characterize when distribution-preserving alignment recovers the latent state up to an orthogonal symmetry and show that approximate recovery is controlled by the separation between linear and nonlinear spectral modes.

Across Gaussian, spherical, toroidal, and Clifford-torus worlds, matched target geometries provide the best linear recovery when optimization succeeds.
For the toroidal world, the matched target reaches near-perfect recovery in two of three runs, with one lower-recovery seed indicating residual optimization sensitivity.
A Gaussian target in $\mathbb R^d$ is therefore not geometry-neutral: even in high dimension, concentration does not eliminate its global geometric and topological mismatch with a manifold-supported latent world.
%\gf{In a simulator-access setting, the framework also recovers hidden initial conditions from nonlinear trajectory observations.}{}

Our characterization relies on the quadratic structure of the alignment loss.
A natural extension is to study $r$-homogeneous invariance objectives and their associated weighted $r$-Laplacians, which formally lead to generalized-Gaussian targets with heavier-than-Gaussian tails.
Establishing the corresponding link between nonlinear spectral structure and linear recovery remains an interesting direction for future work.

\subsection*{AI use statement}

In this work, we used generative AI tools extensively for software development. 
The authors designed the proposed method, specified the intended behavior and requirements of the code, and directed the development process.
Based on these instructions, generative AI tools generated a substantial portion of the surrounding research code, including data preparation and preprocessing scripts, experiment configuration and execution code, evaluation routines, and result-processing and visualization utilities.
The tools were also used to modify and debug the code and to assist with executing experimental runs.

Generative AI tools were additionally used throughout the preparation of the manuscript to draft, rephrase, and edit text and to improve its readability, language, organization, and presentation.
The research objectives, proposed method, experimental decisions, scientific claims, and interpretation of the final results were determined by the authors.

The authors reviewed the AI-assisted code and verified it by executing the relevant data-processing, training, and evaluation pipelines and by checking their outputs for consistency and correctness.
All AI-assisted manuscript content was reviewed and revised, and the reported results, technical statements, and references were checked against the experiments and original sources.
The authors take full responsibility for the final content of this work, including its text, claims, code, data processing, experimental results, and other artifacts produced with the assistance of generative AI.

\subsection*{Ethics statement}

This work focuses on methodological research and does not involve human participants, the collection of personal or sensitive information, or the release of a new dataset.
The experiments use existing datasets in accordance with their applicable licenses and intended research purposes. 
While the proposed method is not designed for a harmful application, its performance may depend on the composition and potential biases of the data on which it is trained and evaluated.
Consequently, results should be interpreted within the experimental settings considered in this work, and additional validation would be necessary before deployment in sensitive or high-stakes applications.
To the best of our knowledge, this work does not raise additional concerns regarding privacy, security, legal compliance, conflicts of interest, or research integrity.

\subsection*{Reproducibility statement}

The assumptions, definitions, and experimental protocol underlying our results are described in the main text. Complete proofs of the theoretical claims are provided in Appendix~\ref{sec:proofs-main-results}. Appendix~\ref{app:experimental-details} documents the latent-world and observation-generation procedures, model architectures, optimization settings, hyperparameter-selection protocol, random seeds, data splits, and evaluation methodology. The definitions, normalizations, and implementation details of the geometry-aware distribution-matching regularizers are given in Appendix~\ref{app:spectral-mmd}.
Additional experiments and seed-level results are reported in the remaining appendix sections. Upon acceptance, we will publicly release the complete source code for data generation, training, evaluation, and figure production, together with the experiment configurations and software dependencies needed to reproduce the reported results.

\subsubsection*{Acknowledgments}
This work was partially funded by ANRT with the CIFRE grant n° 2024/1221, Advanced Track and Trace, and Centre Borelli. 
This work was granted access to the HPC resources of IDRIS (Jean Zay supercomputer) under the allocation AD011017323 made by GENCI

\bibliography{ref}
\bibliographystyle{iclr2027_conference}

\newpage
\appendix

\section{Experimental Details and Supplementary Results}
\label{app:experimental-details}
\subsection{Matched and mismatched target geometries}
\label{app:geometry-experiment}

\paragraph{Latent worlds and observations.}
The Gaussian world uses $Z\sim\mathcal N(0, I_2)$ and an Ornstein--Uhlenbeck transition with first-mode correlation $\rho=0.95$.
Observations are $128\times128$ images generated by a procedural Cartesian-robot renderer, whose horizontal and vertical positions are $0.28Z_1$ and $0.28Z_2$.
The torus world samples two joint angles uniformly on $\mathbb T^2=\mathbb S^1\times\mathbb S^1$, evolves them through the stationary torus diffusion, and renders the resulting two-link arm using the \texttt{reacher-hard} environment from the DeepMind Control Suite and MuJoCo~\citep{tassa2018deepmind,todorov2012mujoco}. 
The spherical world samples a uniform camera direction on $\mathbb S^2$, evolves it through spherical Brownian motion, and renders the Stanford Bunny ~\citep{turk1994zippered} using PyVista ~\citep{sullivan2019pyvista}.
All observations have resolution $128\times128$.

\begin{figure}[h]
    \centering
    \includegraphics[width=.7\columnwidth]{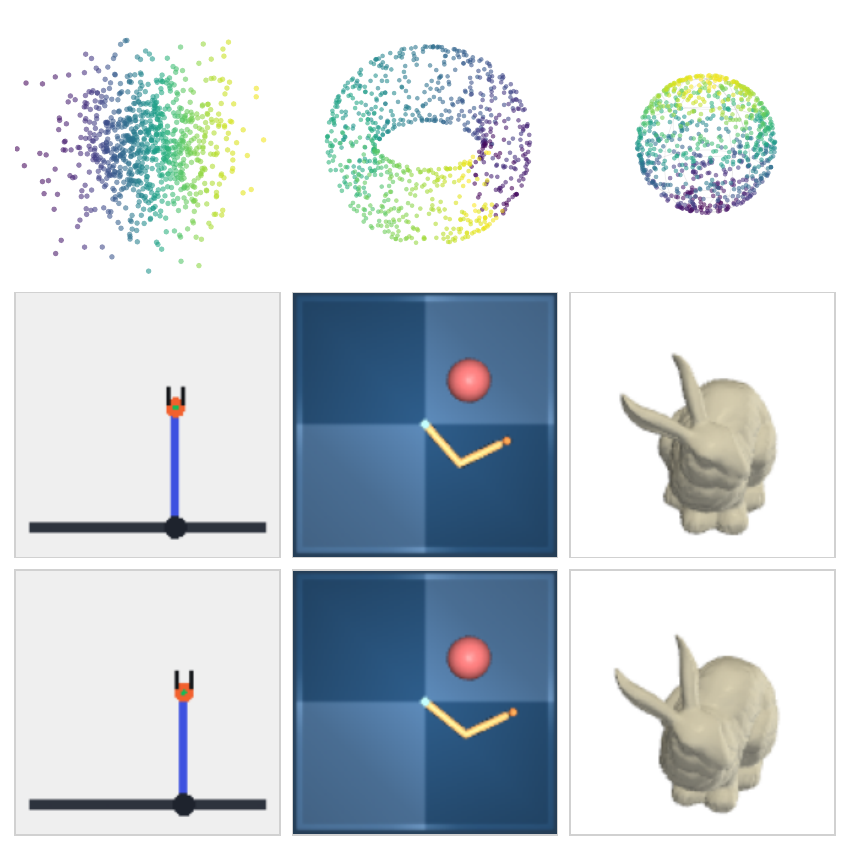}
    \caption{
        Latent worlds and corresponding observations.
        Columns show the Gaussian, toroidal, and spherical worlds. 
        The first row visualizes their latent distributions, while the remaining rows show positive observation pairs $(X,X')$.
    }
    \label{fig:latent-worlds}
\end{figure}

\paragraph{Targets and kernels.}
We combine each latent world with three representation targets: the standard Gaussian $\mathcal N(0,I_2)$, the uniform distribution on $\mathbb S^2$, and the uniform distribution on $\mathbb T^2$.
Each target is enforced using a geometry-specific heat-kernel MMD regularizer~\citep{gretton2012kernel}.
The complete kernel definitions, normalizations, and target expectations are provided in Appendix~\ref{app:spectral-mmd}.

For the Gaussian target, we use the Mehler kernel associated with the Ornstein--Uhlenbeck semigroup, with heat time $t=1$.
For the spherical target, we use the heat kernel on $\mathbb S^2$ with $t=0.125$.
For the toroidal target, we use the product of two heat kernels on $\mathbb S^1$, each with $t=1.5$.
These heat times are fixed a priori rather than selected by validation.
For the compact targets, they are chosen by a simple scale rule: after normalizing the kernel so that $k(x,x)=1$, the kernel value at an antipodal point is approximately $0.1$.
In every case, the kernel is defined with respect to the corresponding invariant target distribution, which makes the target-dependent MMD expectations analytically tractable.
The regularizers are normalized as described in Appendix~\ref{app:spectral-mmd}, and their heat times are held fixed across all latent worlds.

\paragraph{Optimization and model selection.}
The encoder is an ImageNet-1K-pretrained EdgeNeXt-XX-Small backbone~\citep{maaz2022edgenext} followed by a linear projection into the appropriate ambient space. 
We use AdamW for $50$ epochs, batches of $256$ positive pairs, a learning rate of $5\times10^{-2}$, weight decay $10^{-4}$, and a one-epoch linear warm-up. Each epoch contains $100$ freshly generated training minibatches.
For each world--target pair, we train seed $0$ over $\alpha\in\{10^{-3},5\times10^{-3},10^{-2},5\times10^{-2},10^{-1}\}$. 
For each candidate, the encoder is frozen, and a closed-form affine ridge probe is fitted on the training split.
We select $\alpha$ by validation $R^2(Y\!\to\!Z)$, then retrain the selected configuration with seeds $1$, $2$, and $3$.
For the matched Torus-to-Torus configuration, we additionally evaluate ten independent seeds.

\paragraph{Evaluation.}
For every final seed, we freeze the encoder and fit closed-form affine ridge probes $Y\!\to\!Z$ and $Z\!\to\!Y$ on the training split. 
Their raw affine predictions are evaluated on the held-out test split, without projection onto the target manifold.
The test split is never used for selecting \(\alpha\) or for model evaluation choices.
Except for the matched Torus-to-Torus configuration, we report the mean and sample standard deviation over seeds \(1\), \(2\), and \(3\).
For Torus-to-Torus, we analyze ten independently trained seeds.
\begin{table*}[t]
    \centering
    \caption{
    Held-out test $R^2$ between the learned representation $Y$ and the
    ground-truth state $Z$. Values are means and sample standard deviations over three independently trained seeds, except for the
    marked Torus target entries.
}
    \label{tab:geometry-results-full}
    \resizebox{\linewidth}{!}{
    \begin{tabular}{lcccccc}
        \toprule
        & \multicolumn{2}{c}{Gaussian target}
        & \multicolumn{2}{c}{Spherical target}
        & \multicolumn{2}{c}{Torus target} \\
        Latent world
        & $Y\!\to\!Z$ & $Z\!\to\!Y$
        & $Y\!\to\!Z$ & $Z\!\to\!Y$
        & $Y\!\to\!Z$ & $Z\!\to\!Y$ \\
        \midrule
        Gaussian
        & $0.995 \mathbin{\pm} 0.001$
        & $0.995 \mathbin{\pm} 0.001$
        & $0.909 \mathbin{\pm} 0.009$
        & $0.748 \mathbin{\pm} 0.015$
        & $0.606 \mathbin{\pm} 0.175$
        & $0.322 \mathbin{\pm} 0.079$ \\
        Sphere
        & $0.669 \mathbin{\pm} 0.005$
        & $0.997 \mathbin{\pm} 0.001$
        & $0.998 \mathbin{\pm} 0.001$
        & $0.998 \mathbin{\pm} 0.001$
        & $0.921 \mathbin{\pm} 0.003$
        & $0.742 \mathbin{\pm} 0.002$ \\
        Torus
        & $0.493 \mathbin{\pm} 0.011$
        & $0.986 \mathbin{\pm} 0.022$
        & $0.737 \mathbin{\pm} 0.004$
        & $0.983 \mathbin{\pm} 0.001$
        & $0.996 \mathbin{\pm} 0.001^{\dagger}$
        & $0.996 \mathbin{\pm} 0.001^{\dagger}$ \\ \\
        \midrule
        \multicolumn{7}{p{1.0\linewidth}}{
        \scriptsize \(\dagger\) Over ten seeds, five runs achieve near-perfect linear recovery; the reported values are the means and sample standard deviations over these five runs.
        The remaining runs are discussed in the text.
} \\
        \bottomrule
    \end{tabular}
    }
\end{table*}

\paragraph{Results.}
As shown in Table~\ref{tab:geometry-results-full}, matching the representation target to the latent geometry yields the best \(R^2(Y\!\to\!Z)\) for each latent world.
The Gaussian and spherical matched models recover their latent states almost perfectly, reaching respectively $0.995\pm0.001$ and $0.998\pm0.001$.

The two probe directions reveal why target mismatch matters.
For example, in the spherical world, the Gaussian representation is almost perfectly predictable from the latent state,
with $R^2(Z\!\to\!Y)=0.997$, but the latent state is not fully recoverable from that representation,
with $R^2(Y\!\to\!Z)=0.669$.
The same asymmetry is stronger in the toroidal world, where the Gaussian target gives
$R^2(Z\!\to\!Y)=0.986$ but only
$R^2(Y\!\to\!Z)=0.493$.
Thus, a mismatched representation can remain a predictable function of the latent state while losing information required to reconstruct it linearly.
The geometric distortions associated with this loss are visualized in Figure~\ref{fig:geometry-teaser}.

For the toroidal world, optimization is sensitive to initialization.
Five out of ten runs achieve near-perfect linear recovery, with \(R^2(Y\!\to\!Z)=0.996\pm0.001\) and \(L_{\mathrm{inv}}=0.052\pm0.001\).
The other five runs obtain substantially lower recovery, \(R^2(Y\!\to\!Z)=0.597\pm0.187\), together with a higher invariance loss, \(L_{\mathrm{inv}}=0.093\pm0.017\).
This simultaneous increase in invariance loss and decrease in linear recovery indicates that unsuccessful runs remain trapped in poor local minima.
The lower scores therefore reflect an optimization failure rather than a limitation of the matched toroidal geometry.

\subsection{High-Dimensional Morse Dynamics on Clifford Tori}
\label{app:morse-details}

\paragraph{World and observations.}
For each even \(d\in\{4,8,16,32,64,128\}\), let \(m=d/2\) and
\[
\mathcal C_d
=
\mathbb S^m\times\mathbb S^m
\subset\mathbb R^{d+2},
\qquad
Z=(u_0,v_0)\sim\operatorname{Unif}(\mathcal C_d).
\]
Positive pairs are generated independently on the two spherical factors using the Brownian transition of~\eqref{eq:sde}, with ten tangent-space steps and transition time chosen so that
\[
\mathbb E\langle u,u'\rangle
=
\mathbb E\langle v,v'\rangle
=
\rho=0.95.
\]

The observation map \(g\) evolves two particles on the spherical factors under the Morse potential
\[
V_d(u,v)
=
D_d
\left[
1-
\exp\!\left(
-a_d\bigl(r_\varepsilon(u,v)-r_\star\bigr)
\right)
\right]^2,
\qquad
r_\varepsilon(u,v)
=
\sqrt{\|u-v\|^2+\varepsilon^2},
\]
where
\[
a_d=\sqrt{m+1},
\qquad
D_d=a_d^{-1},
\qquad
r_\star=\sqrt2,
\qquad
\varepsilon=10^{-3}.
\]
The scaling \(D_da_d=1\) keeps the force prefactor constant across dimensions.
Starting from \((u_0,v_0)\) with zero velocity, we integrate the corresponding Riemannian dynamics with step size \(0.01\), using geodesic position updates and tangent velocity projections.
The encoder does not observe the initial condition nor the first \(200\) integration steps; its input is the next \(50\) states,
\[
X=g(Z)
=
\bigl((u(t_j),v(t_j))\bigr)_{j=0}^{49}
\in\mathbb R^{50\times(d+2)},
\qquad
t_j=0.01(200+j).
\]

\paragraph{Encoder and targets.}
All targets share an equivariant temporal encoder.
At each time, it forms rotation-invariant Gram features from
\(u,v,\dot u,\dot v\), processes them with a temporal convolutional network of width \(128\) and dilations
\((1,2,4,8,16,32)\), and uses the resulting scalar coefficients to combine the four input vectors equivariantly across time.

We compare three representation targets:
\[
\mathcal N(0,I_d),
\qquad
\mathcal N(0,I_{d+2}),
\qquad
\operatorname{Unif}(\mathcal C_d).
\]
The first Gaussian target matches the intrinsic dimension.
The second provides \(d+2\) output coordinates, allowing the representation to retain the canonical Clifford embedding despite its Gaussian regularization.
The matched target also uses \(d+2\) coordinates, split into two normalized \((m+1)\)-dimensional blocks.

The main Gaussian models use SIGReg, whereas the Clifford model uses a product heat-kernel MMD.
For \(d=(4,8,16,32,64,128)\), its respective kernel temperatures are
\[
t=(0.5,0.5,0.375,0.1875,0.125,0.078125).
\]

\paragraph{Training and evaluation.}
We train for \(50\) epochs with AdamW, batch size \(256\), learning rate \(3\times10^{-4}\), zero weight decay, and one epoch of linear warm-up.
Each epoch contains \(50\) newly generated batches; validation and test sets contain \(10\) and \(40\) fixed batches.
Using seed \(0\), we select
\(\lambda_{\mathrm{reg}}\in\{0.01,0.05,0.1\}\)
by validation \(R^2(Y\!\to\!Z)\), then retrain with seeds \(1,2,3\).
Affine ridge probes are fitted on the training split and evaluated on the held-out test split.
We report test means and sample standard deviations.

\begin{table*}[t]
\centering
\small
\setlength{\tabcolsep}{4.5pt}
\caption{
Linear recovery from delayed Morse trajectories.
Results are test means and sample standard deviations over seeds \(1,2,3\), after selecting \(\lambda_{\mathrm{reg}}\) using seed \(0\).
}
\label{tab:morse-scaling-results}
\begin{tabular}{rllcc}
\toprule
\(d\) & Representation target & Output dim. &
\(R^2(Y\!\to\!Z)\) & \(R^2(Z\!\to\!Y)\)\\
\midrule
4   & \(\mathcal N(0,I_d)\)       & \(d\)   & \(0.4540\pm0.0298\) & \(0.6985\pm0.0795\)\\
    & \(\mathcal N(0,I_{d+2})\)   & \(d+2\) & \(0.6755\pm0.0148\) & \(0.6849\pm0.0101\)\\
    & \(\operatorname{Unif}(\mathcal C_d)\) & \(d+2\) & \(\mathbf{0.8315\pm0.0119}\) & \(\mathbf{0.8362\pm0.0103}\)\\
\midrule
8   & \(\mathcal N(0,I_d)\)       & \(d\)   & \(0.5668\pm0.0194\) & \(0.7561\pm0.0686\)\\
    & \(\mathcal N(0,I_{d+2})\)   & \(d+2\) & \(0.7271\pm0.0010\) & \(0.7473\pm0.0053\)\\
    & \(\operatorname{Unif}(\mathcal C_d)\) & \(d+2\) & \(\mathbf{0.8757\pm0.0038}\) & \(\mathbf{0.8796\pm0.0036}\)\\
\midrule
16  & \(\mathcal N(0,I_d)\)       & \(d\)   & \(0.6590\pm0.0134\) & \(0.8599\pm0.0416\)\\
    & \(\mathcal N(0,I_{d+2})\)   & \(d+2\) & \(0.7748\pm0.0035\) & \(0.8242\pm0.0121\)\\
    & \(\operatorname{Unif}(\mathcal C_d)\) & \(d+2\) & \(\mathbf{0.9266\pm0.0096}\) & \(\mathbf{0.9288\pm0.0083}\)\\
\midrule
32  & \(\mathcal N(0,I_d)\)       & \(d\)   & \(0.7168\pm0.0126\) & \(0.9346\pm0.0126\)\\
    & \(\mathcal N(0,I_{d+2})\)   & \(d+2\) & \(0.8148\pm0.0054\) & \(0.9101\pm0.0233\)\\
    & \(\operatorname{Unif}(\mathcal C_d)\) & \(d+2\) & \(\mathbf{0.9518\pm0.0098}\) & \(\mathbf{0.9536\pm0.0085}\)\\
\midrule
64  & \(\mathcal N(0,I_d)\)       & \(d\)   & \(0.7000\pm0.0690\) & \(0.9512\pm0.0060\)\\
    & \(\mathcal N(0,I_{d+2})\)   & \(d+2\) & \(0.8224\pm0.0124\) & \(0.9508\pm0.0111\)\\
    & \(\operatorname{Unif}(\mathcal C_d)\) & \(d+2\) & \(\mathbf{0.9542\pm0.0038}\) & \(\mathbf{0.9616\pm0.0022}\)\\
\midrule
128 & \(\mathcal N(0,I_d)\)       & \(d\)   & \(0.7465\pm0.0651\) & \(0.9599\pm0.0030\)\\
    & \(\mathcal N(0,I_{d+2})\)   & \(d+2\) & \(0.7948\pm0.0481\) & \(0.9408\pm0.0208\)\\
    & \(\operatorname{Unif}(\mathcal C_d)\) & \(d+2\) & \(\mathbf{0.9667\pm0.0024}\) & \(\mathbf{0.9680\pm0.0018}\)\\
\bottomrule
\end{tabular}
\end{table*}

\paragraph{Gaussian Matérn-MMD control.}
To verify that the comparison is not specific to SIGReg, we repeat both Gaussian baselines using a Matérn-\(3/2\) MMD.
Table~\ref{tab:morse-gaussian-matern} reports the selected regularization weights and test results.
The matched Clifford results in Table~\ref{tab:morse-scaling-results} remain higher than both Matérn-MMD Gaussian controls at every shared dimension.

\begin{table*}[t]
\centering
\small
\setlength{\tabcolsep}{5pt}
\caption{
Gaussian Matérn-\(3/2\) MMD controls.
Results are test means and sample standard deviations over seeds \(1,2,3\); \(\lambda_{\mathrm{reg}}\) is selected using seed \(0\).
}
\label{tab:morse-gaussian-matern}
\begin{tabular}{rllccc}
\toprule
\(d\) & Gaussian target & Output dim. & Selected \(\lambda_{\mathrm{reg}}\) &
\(R^2(Y\!\to\!Z)\) & \(R^2(Z\!\to\!Y)\)\\
\midrule
4   & \(\mathcal N(0,I_d)\)       & \(d\)   & \(0.01\) & \(0.5243\pm0.0073\) & \(0.9093\pm0.0407\)\\
    & \(\mathcal N(0,I_{d+2})\)   & \(d+2\) & \(0.01\) & \(0.7533\pm0.0068\) & \(0.8016\pm0.0320\)\\
8   & \(\mathcal N(0,I_d)\)       & \(d\)   & \(0.01\) & \(0.6342\pm0.0250\) & \(0.9297\pm0.0414\)\\
    & \(\mathcal N(0,I_{d+2})\)   & \(d+2\) & \(0.01\) & \(0.7843\pm0.0041\) & \(0.8545\pm0.0366\)\\
16  & \(\mathcal N(0,I_d)\)       & \(d\)   & \(0.05\) & \(0.6493\pm0.0139\) & \(0.8515\pm0.0389\)\\
    & \(\mathcal N(0,I_{d+2})\)   & \(d+2\) & \(0.05\) & \(0.7637\pm0.0043\) & \(0.8165\pm0.0156\)\\
32  & \(\mathcal N(0,I_d)\)       & \(d\)   & \(0.01\) & \(0.7648\pm0.0138\) & \(0.9650\pm0.0040\)\\
    & \(\mathcal N(0,I_{d+2})\)   & \(d+2\) & \(0.01\) & \(0.8492\pm0.0114\) & \(0.9840\pm0.0125\)\\
64  & \(\mathcal N(0,I_d)\)       & \(d\)   & \(0.01\) & \(0.7861\pm0.0131\) & \(0.9626\pm0.0052\)\\
    & \(\mathcal N(0,I_{d+2})\)   & \(d+2\) & \(0.05\) & \(0.8102\pm0.0129\) & \(0.9397\pm0.0128\)\\
128 & \(\mathcal N(0,I_d)\)       & \(d\)   & \(0.01\) & \(0.8139\pm0.0160\) & \(0.9666\pm0.0026\)\\
    & \(\mathcal N(0,I_{d+2})\)   & \(d+2\) & \(0.01\) & \(0.8463\pm0.0113\) & \(0.9601\pm0.0120\)\\
256 & \(\mathcal N(0,I_d)\)       & \(d\)   & \(0.01\) & \(0.8289\pm0.0199\) & \(0.9691\pm0.0033\)\\
    & \(\mathcal N(0,I_{d+2})\)   & \(d+2\) & \(0.01\) & \(0.8553\pm0.0078\) & \(0.9543\pm0.0143\)\\
\bottomrule
\end{tabular}
\end{table*}

\paragraph{Results.}
The \(d\)-dimensional Gaussian target has the natural number of stochastic degrees of freedom but fewer output coordinates than the canonical Clifford state in \(\mathbb R^{d+2}\).
Increasing its output dimension to \(d+2\) generally improves recovery, showing that output dimensionality accounts for part of the gap.
However, this favorable Gaussian control remains geometrically mismatched: under finite-weight regularization, it must trade off retaining the manifold-supported latent state against approaching a full-dimensional Gaussian distribution.

The matched Clifford target avoids this tradeoff and achieves the highest \(R^2(Y\!\to\!Z)\) at every shared dimension under both Gaussian regularizers.
At \(d=128\), it reaches \(0.9667\), compared with \(0.7948\) for the \(d+2\)-dimensional SIGReg Gaussian and \(0.8463\) for its Matérn-MMD counterpart.
Thus, additional Gaussian coordinates and a different distribution-matching regularizer both improve recovery, but neither closes the gap with a target matching the compact latent geometry.

\subsection{Relationship Between Training Losses and Latent Recovery}
\label{app:loss-recovery}

Our main experiments select the regularization weight using validation latent recovery.
Here, we examine whether the self-supervised training losses identify the same high-recovery regime.
On the matched spherical world, we sweep \(\alpha\in\{10^{-3},5\times10^{-3},10^{-2},5\times10^{-2},10^{-1},5\times10^{-1}\}\) over three independent seeds.
For each trained encoder, we report the validation invariance loss \(L_{\mathrm{inv}}\), distribution-matching loss \(L_{\mathrm{reg}}\), and linear-probe recovery \(R^2(Y\!\to\!Z)\).

The population objective \(\mathcal L_p\) used in our theory is defined using the full squared Euclidean distance.
The implementation reports its rescaled version
\[
L_{\mathrm{inv}}
=
\frac{1}{2}\mathbb E\!\left[\|f(X)-f(X')\|^2\right]
=
1-\mathbb E\!\left[\langle f(X),f(X')\rangle\right],
\]
Consequently, for \(\rho=0.95\), the constrained optimum under this
experimental convention is \(L_{\mathrm{inv}}^\star=1-\rho=0.05\).
The normalized finite-batch MMD satisfies \(\mathbb E[L_{\mathrm{reg}}]=1\) for samples drawn from the spherical target.

\begin{figure}[t]
    \centering
    \includegraphics[width=0.5\linewidth]{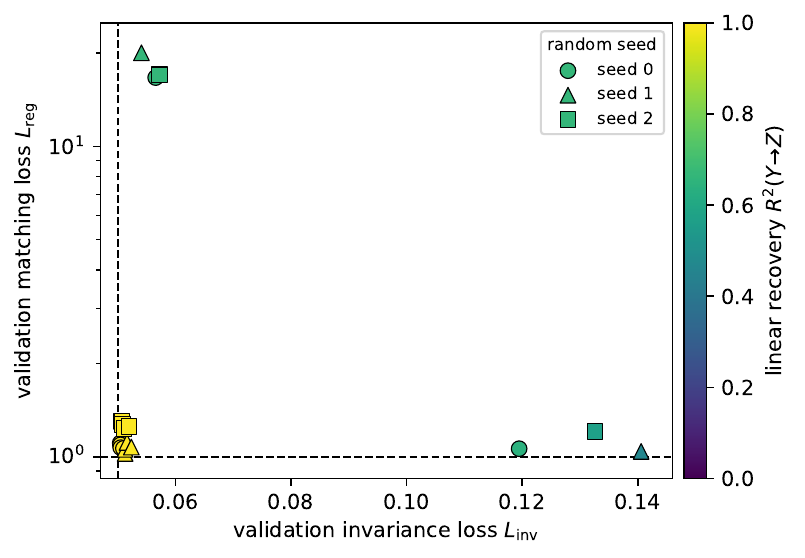}
    \caption{
        Self-supervised validation losses and latent recovery on the matched spherical world.
        Each point represents one regularization weight and seed.
        Marker shape indicates the seed and color indicates \(R^2(Y\!\to\!Z)\).
        The dashed lines denote \(L_{\mathrm{inv}}^\star=0.05\) and the target-sampling reference \(\mathbb E[L_{\mathrm{reg}}]=1\).
    }
    \label{fig:loss-recovery}
\end{figure}

Figure~\ref{fig:loss-recovery} shows that high recovery occurs only when both losses approach their respective references.
For \(\alpha\in[5\times10^{-3},10^{-1}]\), this regime consistently yields
\(R^2(Y\!\to\!Z)>0.99\) across all seeds.
By contrast, \(\alpha=10^{-3}\) produces poor distribution matching despite a near-optimal invariance loss, whereas \(\alpha=0.5\) matches the target distribution but has a substantially larger invariance loss.
Thus, neither loss alone is sufficient, while their joint behavior provides an unsupervised criterion for selecting the high-recovery regime.

\subsection{Empirical Behavior of the Approximate-Recovery Bound}
\label{app:approximate-bound-experiment}

We test whether learned Clifford-torus representations behave consistently with Theorem~\ref{thm:approximate-identifiability}.
Because distribution matching is enforced using finite-weight, finite-batch MMD, this experiment is a practical diagnostic rather than a verification of the population theorem.

\paragraph{Setup.}
For each even intrinsic dimension \(d\in\{4,8,16,32,64,128\}\), with \(m=d/2\), we sample
\[
Z=(U,V)\sim\operatorname{Unif}(\mathcal C_d),
\qquad
\mathcal C_d=\mathbb S^m\times\mathbb S^m
\subset\mathbb R^{d+2}.
\]
Positive pairs follow the stationary diffusion in~\eqref{eq:sde}, with transition time chosen so that the linear-mode correlation is \(\rho=0.95\).

Observations \(X=g(Z)\) are generated by a fixed invertible coupling that alternately transforms the two spherical factors.
The coupling applies three conditional orthogonal maps, each composed of two Householder reflections and parameterized by a separate frozen three-linear-layer network of hidden width \(64\).
Reversing the reflections explicitly inverts the coupling, so \(g\) satisfies Assumption~\ref{ass:world}.
Figure~\ref{fig:clifford-bound-experiment} (left) illustrates the resulting deformation.

The encoder \(f\) has three hidden layers of width \(512\) with SiLU activations and a \(d+2\)-dimensional output.
Its two \((m+1)\)-dimensional output blocks are independently normalized, yielding
\[
Y=h(Z)=\operatorname{Proj}_{\mathcal C_d}(f(X))
\in\mathcal C_d.
\]
We train for \(50\) epochs with AdamW, batch size \(256\), learning rate \(3\times10^{-2}\), regularization weight \(\lambda_{\mathrm{reg}}=0.005\), and seeds \(0,1,2\).
For \(d=4,8,16,32,64,128\), the respective MMD temperatures are
\(0.5,0.5,0.375,0.1875,0.125,\) and \(0.078125\).

\paragraph{Bound evaluation.}
Let \(D=d+2\) denote the ambient dimension.
Since \(\operatorname{Cov}(Z)=c_pI_D\) with \(c_p=2/D\), the optimal alignment loss is
\[
\mathcal L^\star
=2Dc_p(1-\rho)
=4(1-\rho)
=0.2.
\]
For each model, we estimate
\[
\widehat\varepsilon
=
\widehat{\mathcal L}_{\mathrm{inv}}-\mathcal L^\star,
\qquad
\widehat A
=
\arg\min_A\frac{1}{N}\sum_{i=1}^{N}
\|Y_i-AZ_i\|^2,
\qquad
\widehat E_{\mathrm{lin}}
=
\frac{1}{N}\sum_{i=1}^{N}
\|Y_i-\widehat AZ_i\|^2.
\]
For the balanced Clifford torus,
\(\lambda_{\mathrm{nl}}=2\lambda\), hence
\(\rho_{\mathrm{nl}}=\rho^2\).
Theorem~\ref{thm:approximate-identifiability} therefore predicts
\[
\widehat E_{\mathrm{lin}}
\leq
\widehat B
:=
\frac{\widehat\varepsilon}{2(\rho-\rho^2)}
=
\frac{\widehat\varepsilon}{0.095}.
\]

\begin{figure*}[t]
    \centering
    \begin{minipage}[c]{0.55\textwidth}
        \centering
        \includegraphics[width=\linewidth]{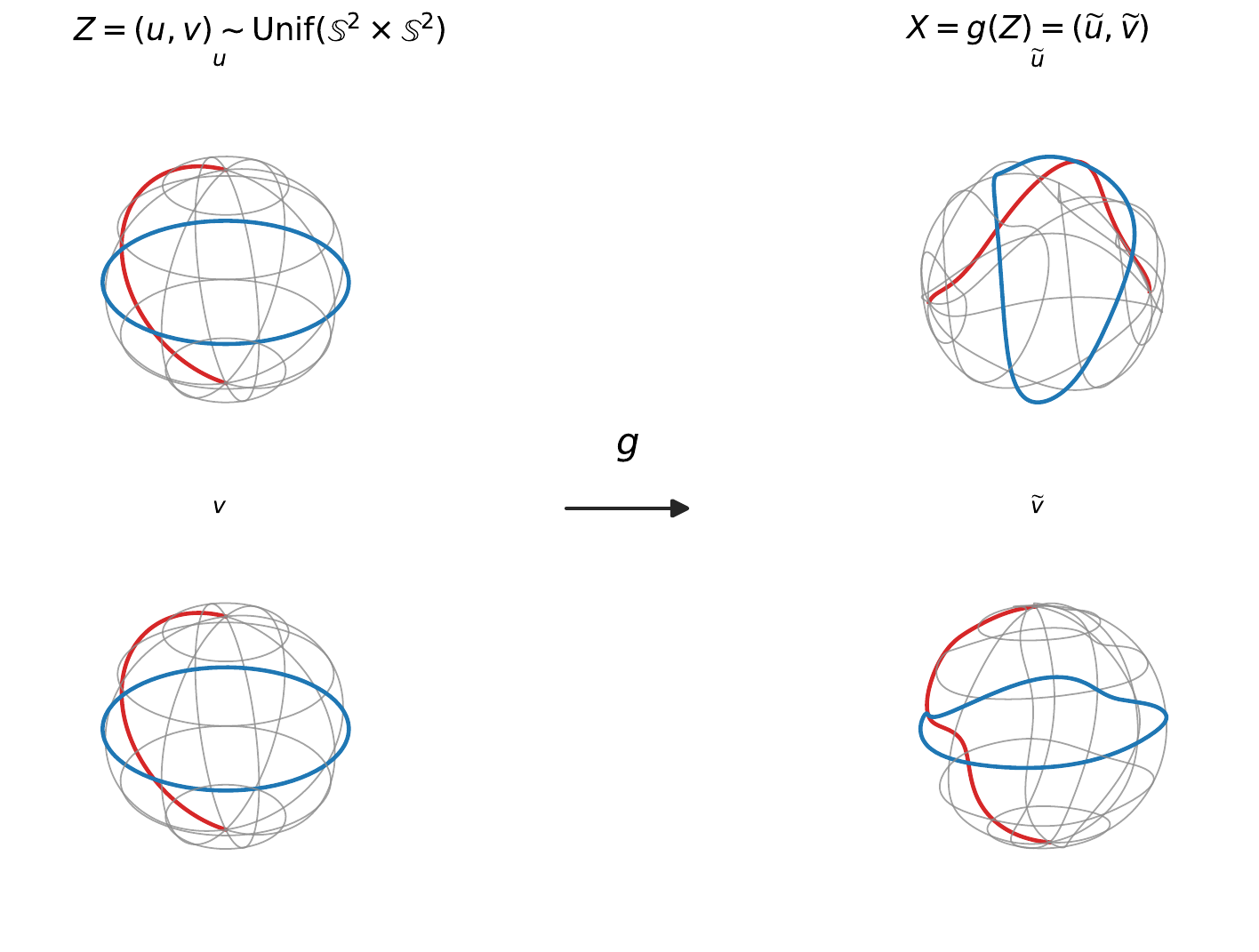}
    \end{minipage}
    \hfill
    \begin{minipage}[c]{0.41\textwidth}
        \centering
        \includegraphics[width=\linewidth]{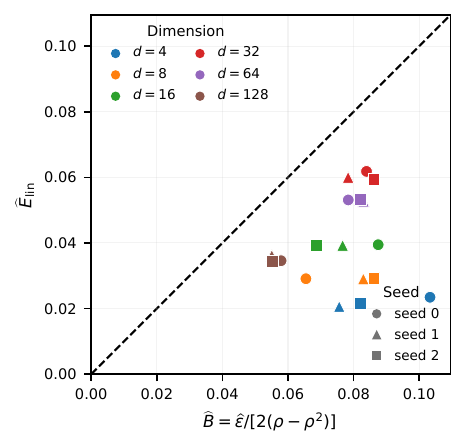}
    \end{minipage}
    \caption{
        \textbf{Left:} Observation map for
        \(\mathcal C_4=\mathbb S^2\times\mathbb S^2\), showing coordinate grids on the latent spherical factors and their images under the nonlinear invertible coupling \(g\).
        \textbf{Right:} Empirical behavior of the approximate-recovery bound across six dimensions and three seeds.
        The axes show the predicted upper bound \(\widehat B\) and the measured deviation \(\widehat E_{\mathrm{lin}}\) from the best linear map.
        The dashed diagonal denotes equality.
    }
    \label{fig:clifford-bound-experiment}
\end{figure*}

All \(18\) dimension--seed runs lie below the diagonal in Figure~\ref{fig:clifford-bound-experiment} (right).
Across runs, \(\widehat E_{\mathrm{lin}}\) ranges from \(0.0206\) to \(0.0618\), while \(\widehat B\) ranges from \(0.0551\) to \(0.1033\).
The MMD remains sufficiently small across all runs to indicate that the learned representations closely approximate the target distribution.
Thus, the learned representations behave consistently with the predicted inequality under approximate finite-sample distribution matching.

\section{Geometry-aware MMD with heat kernels}
\label{app:spectral-mmd}

Let $q$ denote the distribution of the learned representations and $p$ the target distribution.
For a positive-definite kernel $k$, define
\[
m_p(y)=\int_{\mathcal M}k(y,z)\,dp(z),
\qquad
c_p=\iint_{\mathcal M\times\mathcal M}k(z,z')\,dp(z)\,dp(z').
\]
The squared maximum mean discrepancy between $q$ and $p$ is
\begin{equation}
\operatorname{MMD}_k^2(q,p)
=
\mathbb E_{Y,Y'\sim q}[k(Y,Y')]
-2\mathbb E_{Y\sim q}[m_p(Y)]
+c_p.
\label{eq:mmd-primal}
\end{equation}
Given a batch of representations $y_1,\ldots,y_n\sim q$, we use the integrated estimator
\begin{equation}
\widehat{\operatorname{MMD}}_k^2
=
\frac{1}{n^2}\sum_{i,j=1}^n k(y_i,y_j)
-\frac{2}{n}\sum_{i=1}^n m_p(y_i)
+c_p.
\label{eq:integrated-mmd}
\end{equation}
Only the first term is estimated from the learned representations.
The expectations involving the target are computed once, either analytically or by deterministic quadrature.
This avoids sampling from the target during training.

\paragraph{Heat kernels adapted to the target geometry.}
The choice of kernel determines which differences between $q$ and $p$ are detected by MMD~\citep{gretton2012kernel}.
We use the heat kernel associated with the weighted Laplacian $D_p$ introduced in Section~\ref{sec:world}.
Let
\[
-D_p\phi_\ell=\lambda_\ell\phi_\ell,
\qquad
0=\lambda_0<\lambda_1\leq\lambda_2\leq\cdots,
\]
where $(\phi_\ell)_\ell$ is an orthonormal basis of $L^2(p)$ and $\phi_0=1$.
The corresponding heat kernel is
\begin{equation}
\widetilde k_t(y,y')
=
\sum_{\ell\geq0}
e^{-t\lambda_\ell}\phi_\ell(y)\phi_\ell(y'),
\qquad t>0.
\label{eq:heat-kernel-spectral}
\end{equation}
It is therefore adapted both to the geometry of $\mathcal M$ and to the target distribution $p$~\citep{hsu2002stochastic,bakry2014analysis}.
The resulting discrepancy can be written as
\begin{equation}
\operatorname{MMD}_{\widetilde k_t}^2(q,p)
=
\sum_{\ell\geq1}
e^{-t\lambda_\ell}
\left(\mathbb E_q[\phi_\ell]\right)^2.
\label{eq:heat-mmd-spectral}
\end{equation}
Thus, the regularizer compares $q$ and $p$ along every nonconstant spectral mode.
The heat time $t$ controls the scale of the comparison: smaller values retain finer geometric variations, whereas larger values emphasize smoother, large-scale discrepancies.

\paragraph{Product manifolds.}
Heat kernels are particularly convenient when the target space is a product manifold.
If
\[
\mathcal M=\mathcal M_1\times\cdots\times\mathcal M_r,
\qquad
p=p_1\otimes\cdots\otimes p_r,
\]
then
\begin{equation}
\widetilde k_t^{\mathcal M}(y,y')
=
\prod_{a=1}^r
\widetilde k_t^{\mathcal M_a}(y_a,y_a').
\label{eq:product-heat-kernel}
\end{equation}
A kernel on a torus can therefore be constructed by multiplying circle kernels, while a kernel on a Clifford torus can be constructed by multiplying the kernels of its two spherical factors.
This avoids computing the spectrum of the full product manifold.

\paragraph{Regularizer used in practice.}
To compare regularization strengths across target geometries, we divide the MMD by the discrepancy produced by a fully collapsed representation:
\begin{equation}
\widehat D_{\mathrm{MMD}}(q,p)
=
\frac{
\widehat{\operatorname{MMD}}_k^2
}{
\operatorname{MMD}_k^2(\delta_{z_\star},p)
},
\label{eq:normalized-heat-mmd}
\end{equation}
where $z_\star=0$ for a Gaussian target and $z_\star$ is any point on $\mathcal M$ for a uniform homogeneous-manifold target.
By symmetry, the latter value does not depend on the chosen point.
This normalization assigns unit penalty to a constant representation.
The denominator and the expectations involving $p$ are computed once before training, analytically or by deterministic quadrature; during training, only kernel evaluations between representations in the current batch are required.

\paragraph{Kernels used in the experiments.}
For the Gaussian target $p=\mathcal N(0,I_d)$, we use the Mehler kernel associated with the Ornstein--Uhlenbeck semigroup~\citep{bakry2014analysis}.
For uniform spherical targets, we evaluate the exact spherical heat kernel through its Gegenbauer expansion~\citep{zhao2018exact,nicollier2026expandingspherejepa}.
Torus kernels are products of the Fourier heat kernels of their circular factors, and Clifford-torus kernels are products of the heat kernels of their two spherical factors~\citep{hsu2002stochastic}.

In the matched- and mismatched-geometry experiments, we use $t=1$ for Gaussian and toroidal targets and $t=0.125$ for spherical targets.
The dimension-dependent heat times used in the Clifford-torus experiment are reported in Appendix~\ref{app:morse-details}.

\paragraph{Sobolev kernels as an alternative.}
The heat kernel weights the spectral mode $\phi_\ell$ by
$e^{-t\lambda_\ell}$, which rapidly suppresses high-frequency discrepancies.
A more sensitive alternative is to use the slower polynomial weighting
\[
k_s^{\mathrm{Sob}}(x,y)
=
\sum_{\ell\geq0}
(1+\lambda_\ell)^{-s}
\phi_\ell(x)\phi_\ell(y),
\]
for a sufficiently large $s$.
For $\mathcal M=\mathbb R^d$, Matérn kernels provide a standard example of
Sobolev kernels and may be used to match representations to a Gaussian target.
They correspond to the Euclidean Laplacian, rather than to the
Ornstein--Uhlenbeck generator associated specifically with the Gaussian measure.

\section{Canonical Worlds and Linear Admissibility}
\label{app:canonical-worlds}

A latent world \(\mathcal W=(\mathcal M,p,g,K_t)\) specifies the latent space \(\mathcal M\), its stationary distribution \(p\), the observation map \(g\), and the positive-pair transition \(K_t\).
In the examples below, \(K_t\) is generated by the stationary diffusion in~\eqref{eq:sde}, while \(g\) may be any measurable observation map that is injective \(p\)-almost surely.
Linear admissibility additionally depends on the embedding of \(\mathcal M\): its ambient coordinates must satisfy the density and geometric compatibility conditions of Definition~\ref{def:linearly-admissible} and form the complete first nonconstant eigenspace.
The same intrinsic manifold may therefore be admissible under one embedding but not another.

\paragraph{Gaussian world.}
Let \(\mathcal M=\mathbb R^d\) and \(p=\mathcal N(0,I_d)\).
The diffusion is Ornstein--Uhlenbeck, with time-\(t\) transition
\[
Z'=\rho Z+\sqrt{1-\rho^2}\,\varepsilon,
\qquad
\varepsilon\sim\mathcal N(0,I_d),
\qquad
\rho=e^{-t}.
\]
The Gaussian density satisfies the density condition at scale \(\lambda=1\), while the Euclidean embedding satisfies the geometric condition trivially.
The coordinates \(z_1,\ldots,z_d\) form the complete first nonconstant eigenspace, so the Gaussian world is linearly admissible.
The first nonlinear eigenvalue additionally satisfies
\(\lambda_{\mathrm{nl}}/\lambda=2\), which determines its approximate-recovery separation.

\paragraph{Spherical world.}
Let
\(\mathcal M=\mathbb S^m(r)\subset\mathbb R^{m+1}\) and
\(p=\operatorname{Unif}(\mathbb S^m(r))\).
Because the radius is constant, the density condition reduces to uniformity.
Moreover,
\[
\Delta_{\mathcal M}z=-\frac{m}{r^2}z,
\]
and the ambient coordinates form the complete first nonconstant eigenspace of spherical Brownian motion.
The spherical world is therefore linearly admissible at scale
\(\lambda=m/r^2\).
Its first nonlinear eigenvalue satisfies
\[
\lambda_{\mathrm{nl}}
=
\frac{2(m+1)}{r^2},
\qquad
\frac{\lambda_{\mathrm{nl}}}{\lambda}
=
2+\frac{2}{m}>2,
\]
giving a larger relative linear--nonlinear separation than in the Gaussian world.

\paragraph{Flat toroidal world.}
Consider the equal-radius flat torus
\[
\mathbb T_r^k
=
\left\{
r(\cos\theta_1,\sin\theta_1,\ldots,
\cos\theta_k,\sin\theta_k)
\right\}
\subset\mathbb R^{2k}
\]
with its uniform distribution.
Its ambient radius is constant, and
\[
\Delta_{\mathbb T_r^k}z=-\frac{1}{r^2}z.
\]
The sine and cosine coordinates form the complete first nonconstant Fourier eigenspace, so this world is linearly admissible at scale
\(\lambda=1/r^2\).
For \(k\geq2\), products of first-order modes give
\(\lambda_{\mathrm{nl}}/\lambda=2\); for \(k=1\), it give a ratio of \(4\).
Unequal circle radii produce different coordinate eigenvalues and therefore destroy linear admissibility in the canonical embedding.

\paragraph{Balanced products of spheres.}
Let
\[
\mathcal M
=
\mathbb S^m(r)\times\mathbb S^m(r)
\subset\mathbb R^{2m+2}
\]
with the uniform product distribution.
Both factors have covariance \(r^2/(m+1)\) in their ambient coordinates and coordinate eigenvalue \(m/r^2\).
Their ambient coordinates therefore jointly form the complete first nonconstant eigenspace, making the balanced product linearly admissible.

The Clifford torus used in Section~\ref{sec:morse-scaling},
\[
\mathcal C_d
=
\mathbb S^{d/2}(1)\times\mathbb S^{d/2}(1)
\subset\mathbb R^{d+2},
\]
is an instance of this construction.
Cross-products of first-order modes additionally give
\(\lambda_{\mathrm{nl}}/\lambda=2\).

For a general product \(\mathbb S^{m_1}(r_1)\times\mathbb S^{m_2}(r_2)\), the ambient coordinates form a common first eigenspace when \(\frac{m_1}{r_1^2}=\frac{m_2}{r_2^2}\).
Our isotropy assumption additionally requires \(\frac{r_1^2}{m_1+1}=\frac{r_2^2}{m_2+1}\).
Together, these conditions force \(m_1=m_2\) and \(r_1=r_2\) in the canonical product embedding.

These examples show that linear admissibility is determined jointly by the latent distribution, the embedding geometry, and the positive-pair dynamics.
Gaussian spaces, spheres, equal-radius flat tori, and balanced products of spheres are linearly admissible in their canonical embeddings.
However, the intrinsic manifold alone does not determine admissibility: changing the radii of its embedded factors or the dynamics used to generate positive pairs can prevent the ambient coordinates from forming the complete first nonconstant eigenspace.

\section{Gaussian and spherical spectral-gap comparison}
\label{app:spectral-gap-comparison}

For the Gaussian world,
$\lambda_{\mathrm{nl}}/\lambda=2$, whereas for the uniform sphere,
\[
\frac{\lambda_{\mathrm{nl}}}{\lambda}
=
2+\frac{2}{d-1}.
\]
Figure~\ref{fig:spectral-gap-constant} compares the resulting coefficient in
the approximate-identifiability bound.

\begin{figure}[h]
    \centering
    \includegraphics[width=0.75\columnwidth]{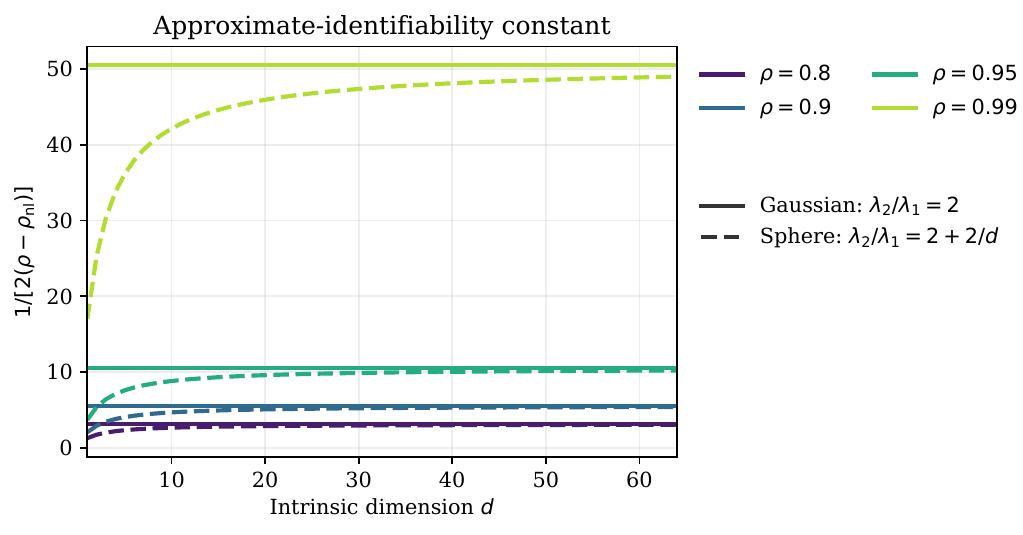}
    \caption{
        Approximate-identifiability coefficient
        $1/[2(\rho-\rho_{\mathrm{nl}})]$ for Gaussian and spherical worlds.
        The spherical advantage is largest in low dimension.
    }
    \label{fig:spectral-gap-constant}
\end{figure}
\section{Proofs of the Main Results}
\label{sec:proofs-main-results}

\subsection{Proof of the converse characterization}

\conversecharacterization*

\begin{proof}
Let $0=\lambda_0<\lambda_1<\lambda_2<\cdots$ denote the eigenvalues of $-D_p$, and let $T_t=e^{tD_p}$. By stationarity, both $Z$ and $Z'$ follow $p$. Hence, for any $h:\mathcal M\to\mathcal M$ satisfying $h_{\#}p=p$,
\begin{align*}
\mathcal L_p(h)
&=\mathbb E\|h(Z)-h(Z')\|^2\\
&=2dc_p-2\sum_{j=1}^d\mathbb E[h_j(Z)h_j(Z')]\\
&=2dc_p-2\sum_{j=1}^d
\langle h_j,T_t h_j\rangle_{L^2(p)}.
\end{align*}
Since $p$ is centered and isotropic, the normalized components
$h_j/\sqrt{c_p}$ form an orthonormal, zero-mean family in $L^2(p)$.
Ky Fan's variational principle~\citep{fan1951maximum} therefore gives
\[
\mathcal L_p(h)
\geq
2dc_p\left(1-e^{-\lambda_1t}\right),
\]
with equality if and only if the components of $h$ span the first non-constant eigenspace $E_1$.

By \eqref{eq:spectral-tightness}, this lower bound is attained. Let $h$ be a distribution-preserving minimizer. Its components therefore span $E_1$, and, by hypothesis, $h$ is linear:
\[
h(z)=Az
\]
for some matrix $A\in\mathbb R^{d\times d}$. Since $h_{\#}p=p$ and
$\operatorname{Cov}_p(Z)=c_pI_d$,
\[
c_pI_d
=
\operatorname{Cov}_p(h(Z))
=
c_pAA^\top,
\]
so $A\in O(d)$. In particular, $A$ is invertible, and the components of $h$ span the same space as the ambient coordinate functions. Consequently,
\[
E_1=\operatorname{span}\{z_1,\ldots,z_d\}.
\]
Since $E_1$ is the eigenspace associated with $\lambda_1$, we obtain
\[
D_pz=-\lambda_1z.
\]

Applying the weighted generator componentwise to the embedding map gives
\[
\Delta_{\mathcal M}z+\nabla_{\mathcal M}\log p
=
-\lambda_1z.
\]
The vector $\Delta_{\mathcal M}z$ is normal to $\mathcal M$, whereas
$\nabla_{\mathcal M}\log p$ is tangent. Decomposing
$z=z^\parallel+z^\perp$ into its tangent and normal components yields
\[
\nabla_{\mathcal M}\log p=-\lambda_1z^\parallel,
\qquad
\Delta_{\mathcal M}z=-\lambda_1z^\perp.
\]
Moreover,
\[
z^\parallel
=
\nabla_{\mathcal M}\left(\frac{\|z\|^2}{2}\right),
\]
and therefore
\[
\nabla_{\mathcal M}
\left(
\log p+\frac{\lambda_1}{2}\|z\|^2
\right)
=0.
\]
Since $\mathcal M$ is connected,
\[
\log p(z)+\frac{\lambda_1}{2}\|z\|^2
\]
is constant on $\mathcal M$. This proves the density and geometric conditions with
$\lambda=\lambda_1$. Together with
\[
E_1=\operatorname{span}\{z_1,\ldots,z_d\},
\]
these are the conditions of linear admissibility at scale $\lambda_1$.
\end{proof}

\subsection{Proof of forward linear identifiability}

\forwardidentifiability*

\begin{proof}
From \eqref{eq:admissible-density},
\[
\nabla_{\mathcal M}\log p=-\lambda z^\parallel.
\]
Combining this identity with \eqref{eq:admissible-geometry} gives
\[
D_pz
=\Delta_{\mathcal M}z+\nabla_{\mathcal M}\log p
=-\lambda z^\perp-\lambda z^\parallel
=-\lambda z.
\]
Thus, the ambient coordinate functions are eigenfunctions of $-D_p$ with eigenvalue $\lambda$. By linear admissibility, they span the complete first non-constant eigenspace:
\[
E_1=\operatorname{span}\{z_1,\ldots,z_d\}.
\]

As in the preceding proof, every distribution-preserving map satisfies
\[
\mathcal L_p(h)
=
2dc_p
-
2\sum_{j=1}^d
\langle h_j,T_t h_j\rangle_{L^2(p)}.
\]
Since the normalized components $h_j/\sqrt{c_p}$ form an orthonormal, zero-mean family, Ky Fan's variational principle yields
\[
\mathcal L_p(h)
\geq
2dc_p\left(1-e^{-\lambda t}\right).
\]
The identity map preserves $p$ and attains this bound. Equality in Ky Fan's principle implies that the components of every minimizer belong to $E_1$. Hence every minimizer has the form
\[
h(z)=Az
\]
for some $A\in\mathbb R^{d\times d}$. Since $h_{\#}p=p$ and
$\operatorname{Cov}_p(Z)=c_pI_d$,
\[
c_pI_d
=
\operatorname{Cov}_p(h(Z))
=
c_pAA^\top,
\]
and therefore $A\in O(d)$.

Moreover, $h(z)=Az$ for $p$-almost every $z$.
Since $p$ has a strictly positive density and $\mathcal M$ is properly embedded, its ambient support is $\mathcal M$.
The identity $A_{\#}p=p$ and the invertibility of $A$ therefore imply $A(\mathcal M)=\mathcal M$.

Conversely, if $Q\in O(d)$ satisfies $Q(\mathcal M)=\mathcal M$, then $Q$ preserves both the induced Riemannian volume and the density $p(z)\propto\exp(-\lambda\|z\|^2/2)$. Thus
$Q_{\#}p=p$, and $h(z)=Qz$ attains the lower bound.
\end{proof}

\subsection{Proof of approximate linear identifiability}

\approximateidentifiability*

\begin{proof}
Since $h_{\#}p=p$ and $p$ is centered, each component of $h$ has zero mean.
Let $Az$ be the componentwise $L^2(p)$-orthogonal projection of $h$ onto $\operatorname{span}\{z_1,\ldots,z_d\}$, and write
\[
h=Az+r,
\qquad
r\perp\operatorname{span}\{1,z_1,\ldots,z_d\},
\]
for some $A\in\mathbb R^{d\times d}$.
Orthogonality and the spectral decomposition of $T_t$ imply
\begin{align*}
\sum_{j=1}^d \langle h_j,T_t h_j\rangle_{L^2(p)}
&\leq
\rho\|Az\|_{L^2(p)}^2 +\rho_{\mathrm{nl}}\|r\|_{L^2(p)}^2,\\
dc_p
&=
\|Az\|_{L^2(p)}^2
+\|r\|_{L^2(p)}^2.
\end{align*}
Using the stationary expansion of the alignment loss therefore gives
\[
\mathcal L_p(h)
\geq 2dc_p(1-\rho)
+2(\rho-\rho_{\mathrm{nl}})\|r\|_{L^2(p)}^2.
\]
Combining this inequality with \eqref{eq:approximate-alignment-assumption} yields
\[
\|r\|_{L^2(p)}^2
\leq\frac{\varepsilon}{2(\rho-\rho_{\mathrm{nl}})}=\eta,
\]
which is \eqref{eq:approximate-linear-bound}.
Equality holds in the underlying spectral inequality when the nonlinear remainder lies entirely in the eigenspace associated with $\lambda_{\mathrm{nl}}$.
Thus, the constant is sharp at the spectral level, without asserting feasibility under the additional manifold-valued and distribution-preserving constraints.

It remains to control the linear ambiguity. Since \(h_{\#}p=p\) and \(p\) is centered and isotropic, the components of \(h\) have Gram matrix \(c_p I_d\).
Moreover, \(r\) is orthogonal to the coordinate functions. Therefore,
\[
c_p I_d = c_p A A^\top + R,
\qquad
R_{ij}=\langle r_i,r_j\rangle_{L^2(p)}.
\]
\[
c_pI_d=c_pAA^\top+R,
\qquad
R_{ij}=\langle r_i,r_j\rangle_{L^2(p)}.
\]
The matrix $R$ is positive semidefinite and
$\|R\|_F\leq\operatorname{tr}(R)=\|r\|_{L^2(p)}^2\leq\eta$. Therefore
\[
\|AA^\top-I_d\|_F\leq\frac{\eta}{c_p}.
\]
Let $A=QS$ be a polar decomposition, with $Q\in O(d)$ and $S\succeq0$. Since
$|s-1|\leq|s^2-1|$ for every singular value $s\geq0$,
\[
\|A-Q\|_F\leq\|AA^\top-I_d\|_F\leq\frac{\eta}{c_p}.
\]
Finally, the linear and nonlinear components are orthogonal, so
\[
\|h-Qz\|_{L^2(p)}^2
=\|(A-Q)z\|_{L^2(p)}^2+\|r\|_{L^2(p)}^2
=c_p\|A-Q\|_F^2+\|r\|_{L^2(p)}^2
\leq\frac{\eta^2}{c_p}+\eta.
\]
This proves \eqref{eq:approximate-orthogonal-bound}.
\end{proof}

\subsection{Proof of the radial obstruction and spherical rigidity}

\radialobstruction*

\begin{proof}
The density condition gives
$\nabla_{\mathcal M}\log p=-\lambda z^\parallel$. Moreover,
\[
\Delta_{\mathcal M}\|z\|^2
=
2m+2\langle z,\Delta_{\mathcal M}z\rangle
=
2m-2\lambda\|z^\perp\|^2,
\qquad
\nabla_{\mathcal M}\|z\|^2=2z^\parallel.
\]
Therefore,
\[
D_p\|z\|^2
=
2m-2\lambda\|z^\perp\|^2
-2\lambda\|z^\parallel\|^2
=
2m-2\lambda\|z\|^2.
\]
It follows that $r(z)=\|z\|^2-m/\lambda$ satisfies
\[
-D_pr=2\lambda r.
\]

If $r$ is nonzero, then it is orthogonal to both the constant eigenspace and the linear eigenspace, since its eigenvalue $2\lambda$ differs from $0$ and $\lambda$. Thus, $r$ is a nonlinear eigenfunction and
$\lambda_{\mathrm{nl}}\leq2\lambda$.

Otherwise, $r$ vanishes identically, so
\[
\mathcal M
\subseteq
\mathbb S^{d-1}\left(\sqrt{\frac{m}{\lambda}}\right).
\]
In particular, $\lambda_{\mathrm{nl}}>2\lambda$ forces this constant-radius alternative.

If moreover $m=d-1$ and $\mathcal M$ is closed, its inclusion into
$\mathbb S^{d-1}(\sqrt{m/\lambda})$ is a local diffeomorphism. Its image is therefore both open and closed. Since the sphere is connected, the image is the entire sphere, and hence
\[
\mathcal M
=
\mathbb S^{d-1}\left(\sqrt{\frac{d-1}{\lambda}}\right).
\]
\end{proof}

%\begin{table}[t]
%\centering
%\small
%\begin{tabular}{lccccc}
%\toprule
%Geometry
%& Intrinsic dim.
%& Embedding dim. \(d_{\mathrm{eff}}\)
%& Relative gap
%& Invariant loss
%& Regularization loss (mmd) \\
%\midrule
%Gaussian \(\mathbb R^m\)
%& \(m\)
%& \(m\)
%& \(2\)
%& \(\mathcal O(mB)\)
%& \(\mathcal O\!\left(m(B^2+CB)\right)\)
%\\
%Sphere \(\mathbb S^m\)
%& \(m\)
%& \(m+1\)
%& \(2+\frac{2}{m}\)
%& \(\mathcal O((m+1)B)\)
%& \(\mathcal O((m+1)B^2)\)
%\\
%Veronese \(\mathbb{RP}^m\)
%& \(m\)
%& \(d_{\mathrm{Ver}}=\frac{(m+1)(m+2)}{2}-1\)
%& \(2+\frac{4}{m+1}\)
%& \(\mathcal O(d_{\mathrm{Ver}}B)\)
%& \(\mathcal O(d_{\mathrm{Ver}}B^2)\)
%\\
%\bottomrule
%\end{tabular}
%\caption{
%Spectral and computational comparison at fixed intrinsic dimension \(m\).
%The Veronese embedding slightly improves the relative spectral gap, but requires
%\(d_{\mathrm{Ver}}=\frac{(m+1)(m+2)}{2}-1=\mathcal O(m^2)\) ambient dimensions.
%}
%\label{tab:geometry_gap_cost}
%\end{table}

\end{document}